\newcommand{\mymathhl}[1]{\colorbox{gray!15}{$\displaystyle #1$}}
\newcommand{\mymathinlinehl}[1]{\colorbox{gray!10}{$#1$}}
\newlist{conditions}{enumerate}{1}
\setlist[conditions, 1]{label = Condition \arabic*:}
\newcommand{\minus}{\scalebox{0.75}[1.0]{$-$}}
\xpatchcmd{\proof}{\topsep6\p@\@plus6\p@\relax}{}{}{}
\xpatchcmd{\example}{\topsep6\p@\@plus6\p@\relax}{}{}{}
\g@addto@macro{\@algocf@init}{\SetKwInOut{Parameter}{Learnable Parameters}} 
\definecolor{myblue2}{rgb}{0.1,0.1,0.7}
\definecolor{myblue}{rgb}{0.1,0.5,0.3}
\newcommand{\torevise}[1]{{\color{gray}}}
\newcommand{\revised}[1]{{#1}}
\theoremstyle{plain}
\newtheorem{theorem}{Theorem}[section]
\newtheorem{lemma}[theorem]{Lemma}
\newtheorem{corollary}[theorem]{Corollary}
\theoremstyle{definition}
\newtheorem{proposition}[theorem]{Proposition}
\newtheorem{definition}[theorem]{Definition}
\theoremstyle{remark}
\newtheorem{remark}[theorem]{Remark}
\newcommand{\bestcell}{\cellcolor{blue!25}}
\let\@algcomment\relax
\newcommand\algcomment[1]{\def\@algcomment{\footnotesize#1}}
\renewcommand\fs@ruled{\def\@fs@cfont{\bfseries}\let\@fs@capt\floatc@ruled
  \def\@fs@pre{\hrule height.8pt depth0pt \kern2pt}%
  \def\@fs@post{}%
  \def\@fs@mid{\kern2pt\hrule\kern2pt}%
  \let\@fs@iftopcapt\iftrue}
\newcommand{\tmstrong}[1]{\textbf{#1}}
\newcommand{\assign}{:=}
\newcommand{\mathd}{\mathrm{d}}
\newcommand{\nobracket}{}
\newcommand{\tmem}[1]{{\em #1\/}}
\newcommand{\tmop}[1]{\ensuremath{\operatorname{#1}}}
\newenvironment{tmparmod}[3]{\begin{list}{}{\setlength{\topsep}{0pt}\setlength{\leftmargin}{#1}\setlength{\rightmargin}{#2}\setlength{\parindent}{#3}\setlength{\listparindent}{\parindent}\setlength{\itemindent}{\parindent}\setlength{\parsep}{\parskip}} \item[]}{\end{list}}
\newcommand{\tmfloatcontents}{}
\newlength{\tmfloatwidth}
\newcommand{\tmfloat}[5]{
  \renewcommand{\tmfloatcontents}{#4}
  \setlength{\tmfloatwidth}{\widthof{\tmfloatcontents}+1in}
  \ifthenelse{\equal{#2}{small}}
    {\setlength{\tmfloatwidth}{0.45\linewidth}}
    {\setlength{\tmfloatwidth}{\linewidth}}
  \begin{minipage}[#1]{\tmfloatwidth}
    \begin{center}
      \tmfloatcontents
      \captionof{#3}{#5}
    \end{center}
  \end{minipage}}
\providecommand{\xequal}[2][]{\mathop{=}\limits_{#1}^{#2}}
\icmltitlerunning{Graph Neural Networks with Learnable and Optimal Polynomial Bases}
\begin{document}

\twocolumn[
\icmltitle{Graph Neural Networks with Learnable and Optimal Polynomial Bases}
\icmlsetsymbol{equal}{*}
\begin{icmlauthorlist}
\icmlauthor{Yuhe Guo}{gaoling}
\icmlauthor{Zhewei Wei}{gaoling,pc,key,moe}
\end{icmlauthorlist}
\icmlaffiliation{gaoling}{Gaoling School of Articial Intelligence, Renmin University of China}
\icmlaffiliation{pc}{Peng Cheng Laboratory}
\icmlaffiliation{key}{Beijing Key Laboratory of Big Data Management and Analysis Methods}
\icmlaffiliation{moe}{MOE Key Lab of Data Engineering and Knowledge Engineering}
\icmlcorrespondingauthor{Zhewei Wei}{zhewei@ruc.edu.cn}
\icmlkeywords{polynomial filter; learnable filter; spectral graph neural networks}
\vskip 0.3in
]

\printAffiliationsAndNotice{}  

\begin{abstract}
    Polynomial filters, a kind of Graph Neural Networks, typically use a predetermined polynomial basis and learn the coefficients from the training data. It has been observed that the effectiveness of the model is highly dependent on the property of the polynomial basis. Consequently, two natural and fundamental questions arise: Can we learn a suitable polynomial basis from the training data? Can we determine the optimal polynomial basis for a given graph and node features?

    In this paper, we propose two spectral GNN models that provide positive answers to the questions posed above. First, inspired by Favard's Theorem, we propose the FavardGNN model, which learns a polynomial basis from the space of all possible orthonormal bases. Second, we examine the supposedly unsolvable definition of optimal polynomial basis from \citet{Wang2022jacobi} and propose a simple model, OptBasisGNN, which computes the optimal basis for a given graph structure and graph signal. Extensive experiments are conducted to demonstrate the effectiveness of our proposed models.
    Our code is available at  
    \href{https://github.com/yuziGuo/FarOptBasis}{https://github.com/yuziGuo/FarOptBasis}.
\end{abstract}
\section{Introduction}

\torevise{Spectral Graph Neural Networks are a type of 
Graph Neural Networks that apply filtering operations on graph Laplacian spectrums. }
\revised{
Spectral Graph Neural Networks are a type of Graph Neural Networks 
that comprise the majority of filter-based GNNs~\cite{Shuman2013,isufi2021edgenets,isufi2022graph}.}
They are designed to create graph signal filters in the spectral domain.
To avoid eigendecomposition, spectral GNNs approximate the desired filtering operations by polynomials of laplacian eigenvalues.

As categorized in \citet{he2022chebii}, 
there are mainly two kinds of spectral GNNs. 
In some works, the desired polynomial filters 
are \textbf{predefined}.  
For example, 
GCN~\cite{kipf2016semi} fixes 
the filter 
to be $I - \hat{L}$, 
and APPNP~\cite{klicpera2019appnp} restricts
the filtering function 
within the Personalized Pagerank.


Another line of research approximates \textbf{arbitrary} filters with learnable polynomials. These models typically fix a predetermined polynomial basis and learn the coefficients from the training data. 
ChebNet~\cite{Defferrard2016cheb} uses Chebyshev basis following the tradition of Graph Signal Processing~\cite{Hammond2009}. 
GPR-GNN~\cite{chien2021gprgnn} uses Monomial basis, which is straightforward.
BernNet~\cite{He2021bern} uses the non-negative Bernstein basis for regularization and interpretation.
JacobiConv~\cite{Wang2022jacobi} chooses among the family of Jacobi polynomial bases, 
with the exact basis determined by two extra hyperparameters. 
ChebNetII~\cite{he2022chebii} revisits the Chebyshev basis, 
and incorporates the power of Chebyshev interpolation
 by reparameterizing learnable coefficients by chebynodes.   
Please refer to Section \ref{sec:background_spectral} for more concrete backgrounds about polynomial filtering.   

However, there are still two fundamental challenges on the choice of basis. 

\textbf{Challenge 1: }
\revised{
It is well known and checked by ablation studies~\cite{Wang2022jacobi} that the choice of basis has a significant impact on practical performance.
}
However, the proportion of known polynomial bases is small and may not include the best-fitting basis for a given graph and signal. Therefore, we pose the following question: \textbf{Can we learn a polynomial basis from the training data out of all possible orthonormal polynomials?}
\footnote{For the concrete definition of orthonormal polynomial bases, 
please check the preliminaries in Section \ref{para:inner_product}.}



\begin{figure*}
    \centering
    \includegraphics[width=2\columnwidth]{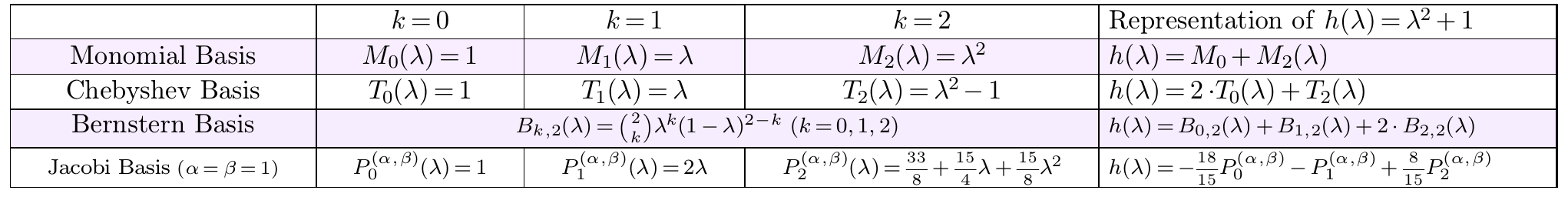}
    \caption{Representation of $h(\lambda)=\lambda^2 + 1$ by 
       different bases.}
    \label{fig:example}
 \end{figure*}

\textbf{Challenge 2: }
On the other hand, although these bases differ in empirical performances, their expressiveness should be the same:
any target polynomial of order $K$ can be represented by 
any complete polynomial basis with truncated order $K$ (See Figure \ref{fig:example} for an example). 
Therefore, \citet{Wang2022jacobi} raised a definition of \textit{optimal basis} from an optimization perspective,   
which promises an optimal convergence rate. 
However, this basis is believed to be unsolvable using existing techniques. Consequently, a natural question is: \textbf{can we compute this optimal basis for a given graph and signal using innovative techniques?}



In this paper, we provide positive answers to the questions
posed above. We summarize our contributions in three folds. 
Firstly, we propose FavardGNN with  \textbf{learnable orthonormal basis} to tackle the first challenge. The theoretical basis of 
FavardGNN is two Theorems in orthogonal polynomials: the Three-term recurrences and its converse, Favard's Theorem. FavardGNN learns from the \textit{whole space} of possible orthonormal basis with $2(K+1)$ extra parameters.
Secondly, we propose OptBasisGNN with \textbf{solvable optimal basis}. 
We solve the optimal basis raised by \citet{Wang2022jacobi} by avoiding the explicit solving of the weight function 
\revised{, which invites the need for eigendecomposition}. 
Note that although we write out the implicitly defined/solved polynomial series in the methodology section, 
we never need to solve it explicitly.   
Last but not least, we conduct \textbf{extensive experiments}
to 
demonstrate the effectiveness of our proposed models.

\section{Background and Preliminaries}
\subsection{Background of Spectral GNNs}
\label{sec:background_spectral}
In this section, 
we provide some necessary backgrounds of spectral graph neural networks, 
and show how the choice of polynomial bases emerges as a problem. 
Notations used are summarized in 
Table \ref{tbl:notations} in 
Appendix \ref{sec:notations}.

\textbf{Graph Fourier Transform.\quad}
Consider an undirected and connected graph $G = (V, E)$ with $N$ nodes, 
its symmetric normalized adjacency matrix 
and laplacian matrix are denoted as
$\hat{P}$ and $\hat{L}$, respectively, $\hat{L}={I}-\hat{P}$.
\textit{Graph Fourier Transform}, as defined in the spatial/spectral domain of graph signal processing, is analogous to the time/frequency domain Fourier Transform
\cite{Hammond2009, Shuman2013}
. 
One column in the representations of $N$ nodes, 
$X \in \mathbb{R}^{N \times d}$, 
is considered a \textit{graph signal}, 
denoted as $x$.
The complete set of $N$ eigenvectors of $\hat{L}$, denoted as $U$,
who show varying structural frequency characteristics~\cite{Shuman2013},
are used as \textit{frequency components}.
\textit{Graph Fourier Transform} 
is defined as $\mymathinlinehl{\hat{x}\assign U^\mathrm{T}x}$,
where signal $x$ is projected to the frequency responses of all components.
It is then followed by 
\textit{modulation},
which suppresses or strengthens certain frequency components, 
denoted as  
$\mymathinlinehl{\hat{x}^{*} \assign \tmop{diag} \{ \theta_0, \cdots, \theta_{N-1} \}\hat{x}}$.
After modulation, \textit{inverse Fourier Transform}:
$\mymathinlinehl{x^{*}\assign U\hat{x}^{*}}$ transforms $\hat{x}^{*}$
back to the spatial domain.
The three operations form the process of \textit{spectral filtering}:
$\mymathinlinehl{U \tmop{diag} \{ \theta_0, \theta_1, \ldots, \theta_{N-1} \} U^\mathrm{T} x
~\refstepcounter{equation}(\theequation)\label{eq-IFT}}$.

\textbf{Polynomial Approximated Filtering.\quad}
In order to avoid time-consuming eigendecomposition,
a line of work approximate  
$\theta_i$ by 
some polynomial function of $\lambda_i$, 
which is the $i$-th eigenvalue of $\hat{L}$, 
i.e. 
$\theta_i \approx h(\lambda_i)$. 
Equation~\eqref{eq-IFT} then becomes a form that is easy for fast
{\textit{localized}} calculation:
$
    U \tmop{diag} \{ h (\lambda_0), h (\lambda_1), \dots, h(\lambda_{N-1}) \} U^\mathrm{T} x = h (\hat{L}) x .
$
As listed in Introduction, 
various \textit{polynomial bases} have been utilized, 
denoted as ${h(\lambda) = \sum^K_{k = 0} \alpha_k g_k (\lambda)}$. 
For further simplicity, 
we equivalently use $b (\hat{P})$ instead of $h(\hat{L})$ in this paper, where 
$b (\hat{P}) := h (I - \hat{P})$. 
Note that $b (\cdot)$ is defined on the spectrum of $\hat{P}$, 
and the $i$-th eigenvalue of $\hat{P}$, denoted as $\mu_i$, equals $1-\lambda_i$.

The filtering process on the input signal $x$ is then expressed as 
$\mymathinlinehl{x \rightarrow z = \sum_{k = 0}^K \alpha_k g_k (\hat{P}) x}$.
When consider 
independent filtering on each of the $d$ channels in $X$ simultaneously, the \textbf{multichannel filtering} can be denoted as:   
$
X \rightarrow Z = \underset{l \in [1, h]}{\|}  \sum_{k = 0}^K \alpha_{k, l}
  g_{k, l} (\hat{P}) X_{:, l}
~\refstepcounter{equation}(\theequation)
\label{eq:MultiChannelFilter}
$.

\subsection{Orthogonal and Orthonormal  Polynomials}

In this section, we give a formal definition of orthogonal and orthonormal polynomials, which plays a central role in the choosing of polynomial bases~\cite{simon2014spectral}.


\textbf{Inner Products.}
\label{para:inner_product}
The inner product of polynomials 
is defined as 
$\mymathinlinehl{ \langle f, g \rangle := \int_{a}^{b} f(x) g(x) w(x)  \mathd x  }
$,
where $f$, $g$ and $w$ are functions of $x$ on interval $(a,b)$, 
and the \textit{weight function} $w$ should be non-negative 
to 
guarantee the 
positive-definiteness of inner-product space.


The definition of the inner products induces 
the definitions of \textit{norm} and \textit{orthogonality}.
The norm of polynomial $f$ is defined as: 
$\mymathhl{\|f\| = \sqrt{\langle f, f \rangle}}$,
and $f$ and $g$ are orthogonal to each other when 
$\mymathhl{\langle f, g \rangle = 0}$. Notice that the concept of inner product, norm, and orthogonality
are all defined with respect to some weight function. 

\textbf{Orthogonal Polynomials.}
A sequence of polynomials $\{p_n(x)\}_{n=0}^{\infty}$ 
where $p_n(x)$ is of exact degree $n$, is called \textit{orthogonal} 
w.r.t. the positive weight function $w(x)$ if, 
for $m, n=0,1,2,\cdots$, there exists  
$\langle p_n, p_m \rangle = \delta_{mn} \|p_n\|^2 (\|p_n\|^2 \neq 0)$,
where the inner product $\langle f, g \rangle$ is defined w.r.t.
$w(x)$. When $\|p_n\|^2 = 1$ for $n=0,1,2,\cdots$, $\{p_n(x)\}_{n=0}^{\infty}$ 
is known as \textbf{orthonormal} polynomial series.

When a weight function is given, the orthogonal or orthonormal series 
with respect to the weight function 
can be solved by \textit{Gram-Schmidt process}.

\begin{remark}
\label{remark:increasing_order}
In this paper, the orthogonal/orthonormal polynomial bases we consider 
are truncated polynomial series, i.e. the polynomials that form a basis are of 
increasing order. 
\end{remark}

\section{Learnable Basis via Favard's Theorem}
\label{sec:methodI}



Empirically,
spectral GNNs with different polynomial bases vary in performance on different datasets, which leads to two observations: (1) the choice of bases matters; (2) whether a basis is preferred might be related to the input, i.e. 
different signals on their accompanying underlying graphs.

For the first observation, we notice that up to now, polynomial filters \textit{pick} polynomial bases 
from well-studied polynomials, e.g. Chebyshev polynomials, Bernstein polynomials, \textit{etc}, 
which narrows down the range of choice.  
For the second observation, we question the reasonableness of fixing a basis during training. 
A related effort is made by JacobiConv~\cite{wang2019improving}, 
who adapt to a Jacobi polynomial series from the family of Jacobi polynomials via \textit{hyperparameter tuning}. However, 
the range they choose from is discrete. 
Therefore, we aim at dynamically \textbf{learn} polynomial basis from the input from a \textbf{vast range}.




\subsection{Recurrence Formula for Orthonormal Bases}
\begin{algorithm}[tb]
    \SetAlgoNoLine
    \KwIn{Input signals $X$ with $d$ channels; 
            Normalized graph adjacency $\hat{P}$; 
            Truncated polynomial order $K$
            }
    \Parameter{$\beta$, $\gamma$, $\alpha$
          }
    \KwOut{Filtered Signals $Z$} 
    \BlankLine
    $x_{-1} \leftarrow 0$\\
    \For{$l=0$ \KwTo $d-1$}{ 
        $x \leftarrow X_{:, l}$
        ,\ $x_0 \leftarrow x / \sqrt{\beta_{0, l}}$
        ,\  $z \leftarrow \alpha_{0, l} x_0$
        \\ \For {$k=0$ \KwTo $K$}{
            $x_{k + 1} \leftarrow (\nobracket \hat{P} x_k - \gamma_{k,l} x_k - \sqrt{\beta_{k, l}} x_{k - 1}) / \sqrt{\beta_{k + 1, l}}$
            \\ $z \leftarrow z + \alpha_{k + 1, l} x_{k + 1}$
        }
        $Z_{:,l} \leftarrow z$
    }
    \KwRet{Z}
    \caption{\textsc{FavardFiltering}}
    \label{alg:favard}
\end{algorithm}
\begin{algorithm}[t]
    \KwIn{Raw features $X_\textrm{raw}$; 
            Normalized graph adjacency $\hat{P}$; 
            Truncated polynomial order $K$
            }
    \Parameter{$W_0$, $b_0$, $W_1$, $b_1$,
          $\beta$, $\gamma$, $\alpha$
          }
    \KwOut{Label predictions $\hat{Y}$} 
    \BlankLine\Indp
    $X \leftarrow X_{\tmop{raw}} W_0 + b_0$
    \\
    $Z \leftarrow${\textsc{FavardFiltering}}($X$,
      $\hat{P}$, $K$, $\beta$, $\gamma$, $\alpha$)
    \\
    $\hat{Y} \leftarrow$Softmax($Z W_1 + b_1$)
    \caption{\textsc{FavardGNN} (For Classification)}
    \label{alg:favardgnn_cls}
\end{algorithm}
Luckily, the Three-term recurrences and Favard's theorem of orthonormal polynomials provide a 
\textit{continuous} parameter space to learn 
 basis.   Generally speaking, three-term recurrences states that every orthonormal polynomial series satisfies a very characteristic form of recurrence relation, and   
Favard's theorem states the converse.




\begin{theorem}[Three Term Recurrences for Orthonormal Polynomials] 
\citep[p.~12]{gautschi2004orthogonal}
    \label{thm:3term_orthonormal}
    For orthonormal polynomials $\{ p_k \}_{k=0}^{\infty}$ w.r.t. weight function $w$, 
    suppose that the leading coefficients of all polynomials are positive, 
    there exists the three-term recurrence relation:
    \begin{align}
    \label{eq:formula_orthonormal}
    \mymathinlinehl{ \sqrt{\beta_{k + 1}}} p_{k + 1} (x) 
    & = (x - \gamma_k) p_k (x) -
            \mymathinlinehl{\sqrt{\beta_k}} p_{k - 1} (x), 
    \notag \\
    & p_{-1}(x) \assign 0, \ p_0 (x) = 1 / \sqrt{\beta_0}, 
    \notag \\
    & \gamma_k \in \mathbb{R}, \ \sqrt{\beta_k} \in \mathbb{R}^{+}, \  k \geq 0
\end{align}
with $\beta_0 = \int w(x) \mathd x$.

\end{theorem}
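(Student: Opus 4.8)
The plan is to exploit the single structural fact that, because each $p_j$ has exact degree $j$, the family $\{p_0,\dots,p_n\}$ is an orthonormal basis of the space $\mathcal{P}_n$ of polynomials of degree at most $n$. First I would fix $k \ge 0$ and observe that $x\,p_k(x)$ has degree $k+1$, so it admits a unique expansion $x\,p_k = \sum_{j=0}^{k+1} c_{k,j}\,p_j$ with Fourier coefficients $c_{k,j} = \langle x\,p_k, p_j \rangle$. The entire recurrence will then drop out once I show that only three of these coefficients can survive.

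The key step is the self-adjointness of multiplication by $x$ with respect to $w$: since $\langle x f, g \rangle = \int_a^b x\,f(x)\,g(x)\,w(x)\,\mathd x = \langle f, x g \rangle$, I may rewrite $c_{k,j} = \langle p_k, x\,p_j \rangle$. Now $x\,p_j$ has degree $j+1$, so whenever $j \le k-2$ it lies in $\mathcal{P}_{k-1} = \operatorname{span}\{p_0,\dots,p_{k-1}\}$, which is orthogonal to $p_k$; hence $c_{k,j} = 0$ for all $j \le k-2$. This collapses the expansion to $x\,p_k = c_{k,k+1}\,p_{k+1} + c_{k,k}\,p_k + c_{k,k-1}\,p_{k-1}$, exactly the claimed three-term form once I set $\gamma_k := c_{k,k} = \langle x\,p_k, p_k \rangle$, which is manifestly real.

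Next I would pin down the two off-diagonal coefficients and their signs, writing $a_j > 0$ for the (positive, by hypothesis) leading coefficient of $p_j$. Isolating the top-degree part $x\,p_k = a_k x^{k+1} + q$ with $q \in \mathcal{P}_k$, and using $\langle q, p_{k+1}\rangle = 0$ together with $\langle x^{k+1}, p_{k+1}\rangle = a_{k+1}^{-1}$, gives $c_{k,k+1} = a_k/a_{k+1} > 0$, so I may legitimately name it $\sqrt{\beta_{k+1}} \in \mathbb{R}^{+}$. The remaining coefficient is forced by symmetry: $c_{k,k-1} = \langle x\,p_k, p_{k-1}\rangle = \langle p_k, x\,p_{k-1}\rangle = c_{k-1,k}$, and the same leading-term computation one index lower yields $c_{k-1,k} = a_{k-1}/a_k = \sqrt{\beta_k}$. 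This is the one genuinely delicate point, and the main obstacle: it is what makes the tridiagonal matrix of multiplication-by-$x$ symmetric and thereby forces the single quantity $\sqrt{\beta_k}$ to reappear across two consecutive recurrences; everything else is bookkeeping.

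Finally I would dispose of the initial conditions. Setting $p_{-1} := 0$ makes the $k=0$ instance well defined, and normalising the constant polynomial $p_0 = 1/\sqrt{\beta_0}$ to unit norm forces $\langle p_0, p_0 \rangle = \beta_0^{-1}\int_a^b w(x)\,\mathd x = 1$, i.e.\ $\beta_0 = \int_a^b w(x)\,\mathd x$. Reassembling the three surviving terms and moving $p_{k+1}$ to the left gives $\sqrt{\beta_{k+1}}\,p_{k+1} = (x - \gamma_k)\,p_k - \sqrt{\beta_k}\,p_{k-1}$, which completes the argument.
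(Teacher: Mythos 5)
Your proof is correct and follows essentially the same route as the paper's: expand $x\,p_k$ in the basis, use self-adjointness of multiplication by $x$ to kill the coefficients of $p_j$ for $j\le k-2$, use the symmetry $\langle x p_k, p_{k-1}\rangle=\langle p_k, x p_{k-1}\rangle$ to force the same $\sqrt{\beta_k}$ to reappear in consecutive relations, and derive positivity from the positive leading coefficients. The only (cosmetic) difference is that the paper first proves the general orthogonal three-term recurrence with coefficients $A_k,B_k,C_k$ and then specializes by taking inner products with $p_{k\pm1}$, whereas you work directly in the orthonormal setting where the Fourier coefficients are plain inner products.
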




\begin{theorem}[Favard's Theorem; Orthonormal Case]
\cite{favard1935polynomes}, \citep[p.~14]{simon54orthogonal}
    \label{thm:far-orthonormal}
    A polynomial series $\{ p_k \}_{k = 0}^{\infty}$ who satisfies the recurrence relation in Equation~\eqref{eq:formula_orthonormal}
    is orthonormal w.r.t. a weight function $w$ that  $\beta_0 = \int w(x) \mathd x$. 
\end{theorem}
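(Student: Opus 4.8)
The plan is to exhibit the weight function explicitly from the recurrence coefficients, rather than to guess it, by packaging the coefficients into the associated Jacobi matrix and invoking its spectral decomposition. Concretely, since $\sqrt{\beta_k} \in \mathbb{R}^+$ and $\gamma_k \in \mathbb{R}$, I would form the real symmetric tridiagonal matrix $J$ with diagonal entries $\gamma_0, \dots, \gamma_K$ and off-diagonal entries $\sqrt{\beta_1}, \dots, \sqrt{\beta_K}$, so that the recurrence \eqref{eq:formula_orthonormal} is exactly the statement that multiplication ``$\times x$'' acts as $J$ on the coordinates of a polynomial expanded in $\{p_k\}$. Diagonalizing $J = V \Lambda V^{\mathrm T}$ with $\Lambda = \operatorname{diag}(x_0, \dots, x_K)$ and $V$ orthogonal, I would then define the candidate (atomic) weight as $w = \sum_i \omega_i \, \delta_{x_i}$ with masses $\omega_i = \beta_0 \, V_{0 i}^2 \ge 0$ placed at the nodes $x_i$, equivalently the functional $\mathcal{L}[f] := \beta_0\, [f(J)]_{00} = \int f(x)\, w(x)\,\mathd x$.

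The crux is then a single clean identity: letting $e_0, \dots, e_K$ be the standard basis, I would prove by induction on $k$ that $p_k(J)\, e_0 = \beta_0^{-1/2} e_k$. The base case is $p_0(J) e_0 = \beta_0^{-1/2} e_0$, and the inductive step is immediate because feeding the recurrence $\sqrt{\beta_{k+1}}\, p_{k+1} = (x - \gamma_k) p_k - \sqrt{\beta_k}\, p_{k-1}$ through $J$ reproduces exactly the $k$-th column of the tridiagonal $J$. With this in hand, orthonormality follows from self-adjointness: since $p_m(J)$ and $p_n(J)$ are symmetric and commute, $\langle f,g\rangle := \mathcal{L}[fg]$ satisfies $\mathcal{L}[p_m p_n] = \beta_0\, \langle e_0, p_m(J) p_n(J) e_0\rangle = \beta_0\, \langle p_m(J) e_0, p_n(J) e_0\rangle = \beta_0 \cdot \beta_0^{-1} \langle e_m, e_n\rangle = \delta_{mn}$. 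This is precisely orthonormality of $\{p_k\}$ with respect to $w$, and evaluating at $f = 1$ gives $\int w\,\mathd x = \mathcal{L}[1] = \beta_0\,[I]_{00} = \beta_0$, as required.

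It is worth noting where each hypothesis is spent: $\gamma_k \in \mathbb{R}$ and $\sqrt{\beta_k} \in \mathbb{R}^+$ are exactly what make $J$ real and symmetric (hence $x_i \in \mathbb{R}$ and $V$ real orthogonal), while the positivity $\beta_0 > 0$ together with $\omega_i = \beta_0 V_{0i}^2$ guarantees that $w$ is a genuine nonnegative weight --- the feature that separates the orthonormal (positive-definite) case from merely formal orthogonality. Standard facts about Jacobi matrices with nonzero off-diagonals (simple spectrum, nonvanishing first eigenvector components) further ensure the $K+1$ nodes are distinct and all masses strictly positive, so the measure is nondegenerate on polynomials up to degree $K$.

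The main obstacle is conceptual rather than computational, and concerns exactly how literally one reads ``weight function''. The argument above constructs a discrete/atomic measure, which is the correct and natural object in our setting because the bases are truncated (Remark \ref{remark:increasing_order}) and, in the GNN application, the relevant inner product only ever sees the finite spectrum of $\hat{P}$. Upgrading to the full infinite sequence $\{p_k\}_{k=0}^\infty$ with an absolutely continuous $w$ would instead require solving the associated Hamburger moment problem, i.e. the spectral theorem for a possibly unbounded Jacobi operator together with a determinacy argument; that is the genuinely delicate analytic step, and the one I would isolate and, for our purposes, bypass by working with the finite construction.
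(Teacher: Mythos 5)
Your proof is correct, and it takes a genuinely different route from the paper's. The paper never constructs the weight function: it first invokes the classical orthogonal-case Favard theorem (Theorem~\ref{thm:favard-monic} and its general-case corollary in Appendix~\ref{sec:proof-of-far}, cited essentially as a black box) to obtain orthogonality with respect to \emph{some} positive weight $w$, and then spends all of its effort on the normalization --- pairing the recurrence with $p_{k-1}$ and $p_{k+1}$, using the symmetry $\langle x p_k, p_{k+1}\rangle = \langle x p_{k+1}, p_k\rangle$ to conclude $\langle p_k,p_k\rangle = \langle p_{k-1},p_{k-1}\rangle$ for all $k$, and finally computing $\langle p_0,p_0\rangle = \beta_0^{-1}\int w(x)\,\mathd x = 1$. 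Your Jacobi-matrix argument instead exhibits the measure explicitly (masses $\omega_i=\beta_0 V_{0i}^2$ at the eigenvalues of $J$) and gets orthonormality in one line from $p_k(J)e_0=\beta_0^{-1/2}e_k$ together with self-adjointness; it is self-contained where the paper leans on an unproved classical theorem, and the atomic measure is arguably the more natural object here, since Remark~\ref{remark:increasing_order} restricts attention to truncated bases and the inner products that actually arise in OptBasisGNN live on the finite spectrum of $\hat{P}$. The trade-off, which you correctly isolate yourself, is that your construction as written only orthonormalizes $p_0,\dots,p_K$ against a $K$-dependent measure (indeed $p_{K+1}$ vanishes at all $K+1$ nodes, so the same measure cannot serve the next polynomial), whereas the theorem as stated concerns the full infinite sequence; closing that gap requires the spectral theorem for the semi-infinite Jacobi operator plus a determinacy argument, which the paper sidesteps entirely by citing Favard. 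For the purposes of this paper the finite version you prove is all that is ever used.
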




By Theorem \ref{thm:far-orthonormal}, 
any possible recurrences with the form~\eqref{eq:formula_orthonormal} 
defines an orthonormal basis. 
By Theorem~\ref{thm:3term_orthonormal}, 
such a formula covers the whole space of orthonormal polynomials.
If we set $\{\sqrt{\beta_k}\}$ and $\{\gamma_k\}$ to be learnable parameters with $\sqrt{\beta_k}>0(k\geq 0)$, any orthonormal basis can be obtained.

We put the more general \textit{orthogonal} form of Theorem~\ref{thm:3term_orthonormal} and Theorem~\ref{thm:far-orthonormal} 
in Appendix~\ref{sec:proof-of-3term} to \ref{sec:proof-of-favard-orthonormal}.  
In fact, the property of three-term recurrences for orthogonal polynomials has been used multiple times 
in the context of current spectral GNNs
to reuse $g_k(\hat{P})x$ and $g_{k-1}(\hat{P})x$ 
for the calculation of $g_{k+1}(\hat{P})x$.  
\citet{Defferrard2016cheb} 
owe the fast filtering of ChebNet to employing the three-term recurrences of \textit{Chebyshev polynomials} (the first kind, 
which is orthogonal w.r.t.  $\frac{1}{\sqrt{x^2-1}}$): 
$T_{k+1}(x) = 2xT_k(x)-T_{k-1}(x)$. 
Similarly, 
JacobiConv~\cite{Wang2022jacobi} employs the 
three-term recurrences for \textit{Jacobi polynomials} (orthogonal w.r.t. to $(1-x)^{a}(1+x)^{b}$). In this paper, however, we focus on orthonormal bases because they minimize  the mutual influence of basis polynomials and the influence of the unequal norms of different basis polynomials.

\subsection{FavardGNN}

\textbf{Formulation of FavardGNN.\quad}We formally write the architecture of
\textsc{FavardGNN} (Algorithm \ref{alg:favardgnn_cls}), with the filtering process illustrated in \textsc{FavardFiltering} (Algorithm \ref{alg:favard}). Note that the iterative process of Algorithm \ref{alg:favard} (lines 3-5) follows exactly from Equation~\eqref{eq:formula_orthonormal} in Favard's Theorem. The key insight is to treat the coefficients $\beta, \gamma, \alpha$ in Equation~\eqref{eq:formula_orthonormal} as learnable parameters. Since Theorem~\ref{thm:3term_orthonormal} and Theorem~\ref{thm:far-orthonormal}  state that the orthonormal basis must satisfy the employed iteration and vice versa, it follows that the model can learn a suitable orthonormal polynomial basis from among all possible orthonormal bases.

Following convention, before \textsc{FavardFiltering}, an MLP is used to map the raw features onto the signal
channels (often much less than the dimension of raw features). In regression
problems, the filtered signals are directly used as predictions; for
classification problems, they are combined by another MLP followed by a
softmax layer.

\textbf{Parallel Execution.\quad}Note that for convenience of presentation, we
write the \textsc{FavardFiltering} Algorithm in a form of nested loops. In fact, the
computation on different channels (the inner loop $k$) is conducted
simultaneously. We put more concrete implementation in PyTorch-styled
the pseudocode in Appendix \ref{sec:pseudo_torch_Favard}.

    
      
      
      
      
      
      
      
      
      
      
      
      
      


\subsection{Weaknesses of FavardGNN}
\label{sec:weakness}


However, there are still two main weaknesses of FavardGNN. Firstly, the orthogonality lacks interpretability. 
The weight function $w$ can only be solved analytically in a number of cases \cite{Geronimo1991weightfunction}.  
Even if the weight function is solved, the form of $w$ might be too complicated to understand. 

Secondly, 
\textsc{FavardFiltering} is not good in  convergence properties:
consider a simplified optimization problem $\min \|Z-Y\|^2_\textrm{F}$ which has been examined in the context of GNN 
\cite{keyulu2021Optm, Wang2022jacobi}, 
even this problem is non-convex w.r.t the learnable parameters in $Z$. 
We will re-examine this problem in the experiment section. 
\section{Achieving Optimal Basis}
\label{sec:optbasis}

Although FavardGNN potentially reaches the whole space of orthonormal polynomial series, 
on the other hand, 
we still want to know: 
\textbf{whether there is an optimal and accessible basis} 
in this vast space.

Recently, \citet{Wang2022jacobi} raises a criterion for 
optimal basis. Since different bases are the same in expressiveness,
this criterion is induced from an angle of optimization.
However, \citet{Wang2022jacobi} believe that this optimal basis 
is unreachable.
In this section, we follow this definition of optimal basis, 
and show how we can \textit{exactly} apply this optimal basis to our polynomial filter  
with $O (K | E |)$ time complexity.

\citet{Wang2022jacobi} make an essential step towards this question: 
they derive and define an optimal basis from the angle of optimization. 
However, they do not exhaust their own finding in their model, 
since based on a habitual process, they believe that the optimal basis they find is inaccessible. 
In this section, we show how we can \textit{exactly} apply this optimal basis to our polynomial filter 
in $O (K | E |)$ time complexity. 

\subsection{A Review: A Definition for Optimal Basis}
\label{sec:optdefinition}
We start this section with a quick review of the related part from \citet{Wang2022jacobi}, with a more complete review put in Appendix \ref{sec:SumWang}.

\textbf{Definition of Optimal Basis.\quad}
\label{sec:SumWang_short}
\citet{Wang2022jacobi} considers the squared loss 
$R = \frac{1}{2} \| Z - Y \|_\textrm{F}^2$, 
where $Y$ is the target signal.
Since each signal channel 
contributes independently to the loss, 
the authors then consider the loss function channelwisely and 
ignore the index $l$, that is,
$r = \frac{1}{2} \| z - y \|^2_\textrm{F}$, 
where $z = \sum_{k = 0}^K \alpha_k g_k (\hat{P}) x$.

The task at hand is to seek a polynomial series $\{g_k\}_{k=0}^{K}$ which is \textit{optimal} for the convergence of coefficients $\alpha$.
Since $r$ is convex w.r.t. $\alpha$,
the gradient descent's convergence rate reaches optimal 
when the {\textbf{Hessian matrix}} is identity. 
The $(k_1, k_2)$ element 
$(k_1, k_2 \in \left[0,K\right])$
of the Hessian matrix is:
\begin{equation}
    H_{k_1 k_2} = \frac{\partial^2 r}{\partial \alpha_{k_1}
    \partial \alpha_{k_2}} = x^\mathrm{T}
    g_{k_2} (\hat{P}) g_{k_1} (\hat{P}) x.
    \label{eqHessian}
  \end{equation}

\begin{tcolorbox}[boxrule=0.pt,height=18mm,valign=center,colback=blue!3!white]
    \begin{definition}[Optimal basis for signal $x$]
        For a given graph signal $x$, polynomial basis $\{ g_k \}_{k = 0}^K$ 
        is optimal in convergence rate when $H$
        given in \eqref{eqHessian} is an \textbf{identity matrix}.
    \label{def:opt_basis}
    \end{definition}
\end{tcolorbox}

\citet{Wang2022jacobi} further reveal the orthonormality inherent in the optimal basis by rephrasing Equation~\eqref{eqHessian} into  
$
H_{k_1 k_2} = \int_{\mu = - 1}^1 g_{k_1} (\mu) g_{k_2} (\mu) f
   (\mu) \mathd \mu 
$,
where the form of $f$ is given in Proposition~\ref{prop:exact_weight} and 
and derivation is delayed in Appendix~\ref{sec:SumWang}.
Combining Definition \ref{def:opt_basis},
we soonly get:

\begin{tcolorbox}[boxrule=0.pt,height=19mm,valign=center,colback=blue!3!white]
\begin{proposition}[Exact weight function of optimal basis]
    The optimal polynomial basis in Definition \ref{def:opt_basis} is orthonormal w.r.t. weight function $f$, 
        where 
        $f (\mu) = \frac{^{} \vartriangle F (\mu)}{\vartriangle\mu}$, 
        with
        $F (\mu) \assign \sum_{\mu_i \leq \mu} (U^\mathrm{T} x)_i^2$.
        \label{prop:exact_weight}
\end{proposition}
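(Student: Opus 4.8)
The plan is to prove the identity that the excerpt asserts immediately before the statement, namely that $x^{\mathrm{T}} g_{k_2}(\hat{P}) g_{k_1}(\hat{P}) x = \int_{-1}^{1} g_{k_1}(\mu) g_{k_2}(\mu) f(\mu)\,\mathd\mu$, and then to read off orthonormality directly by combining Definition~\ref{def:opt_basis} with the definition of an orthonormal series from Section~\ref{para:inner_product}. Once the Hessian entry is written as an integral against $f$, the condition $H = I$ becomes verbatim the orthonormality condition.

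First I would diagonalize. Since $\hat{P}$ is symmetric and shares its eigenvectors $U$ with $\hat{L} = I - \hat{P}$, I write $\hat{P} = U\,\mathrm{diag}(\mu_0,\dots,\mu_{N-1})\,U^{\mathrm{T}}$ with $U$ orthogonal, so that the spectral mapping theorem gives $g_k(\hat{P}) = U\,\mathrm{diag}(g_k(\mu_0),\dots,g_k(\mu_{N-1}))\,U^{\mathrm{T}}$ for every polynomial $g_k$. Substituting into the Hessian entry~\eqref{eqHessian} and setting $\hat{x} = U^{\mathrm{T}} x$ (the graph Fourier transform of $x$), the orthogonal factors cancel and the diagonal matrices multiply entrywise, yielding the finite sum
\[
H_{k_1 k_2} = \sum_{i=0}^{N-1} g_{k_1}(\mu_i)\, g_{k_2}(\mu_i)\, (U^{\mathrm{T}} x)_i^2 .
\]
Next I would recognize this sum as an integral against an atomic measure. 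The function $F(\mu) = \sum_{\mu_i \le \mu} (U^{\mathrm{T}} x)_i^2$ is a nondecreasing step function that jumps by $(U^{\mathrm{T}} x)_i^2 \ge 0$ at each eigenvalue $\mu_i \in [-1,1]$; its difference quotient $f(\mu) = \vartriangle F(\mu)/\vartriangle\mu = \sum_i (U^{\mathrm{T}} x)_i^2\,\delta(\mu - \mu_i)$ is therefore a nonnegative weight concentrated on the spectrum. Integrating $g_{k_1} g_{k_2}$ against $f$ reproduces exactly the sum above, which establishes the claimed identity. Definition~\ref{def:opt_basis} then requires $H = I$, i.e. $\int_{-1}^{1} g_{k_1}(\mu) g_{k_2}(\mu) f(\mu)\,\mathd\mu = \delta_{k_1 k_2}$, which is precisely the statement that $\{g_k\}_{k=0}^{K}$ is orthonormal with respect to $f$ in the sense of Section~\ref{para:inner_product}.

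The main point requiring care—more a matter of interpretation than a genuine obstacle—is that $f$ is not a classical non-negative function as assumed in the inner-product definition, but a sum of Dirac masses, since $F$ is piecewise constant. I would address this by reading $\int \cdot\, f\,\mathd\mu$ as the Riemann--Stieltjes integral $\int \cdot\,\mathd F$, equivalently as integration against the atomic spectral measure of $x$, under which the positive-definiteness needed for a bona fide inner product holds as long as $x$ has nonzero Fourier mass on at least $K+1$ distinct eigenvalues. I would also flag the case of repeated eigenvalues of $\hat{P}$, where the corresponding jumps of $F$ simply aggregate the squared Fourier coefficients; this leaves the computation unchanged but is exactly what makes $f$ genuinely discrete, and it is this discreteness that both explains why $f$ is generally intractable to write in closed form and motivates the eigendecomposition-free construction developed in the rest of the section.
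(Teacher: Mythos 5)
Your proposal is correct and follows essentially the same route as the paper's own derivation (given in Appendix F): diagonalize $\hat{P}$ to rewrite the Hessian entry as $\sum_i g_{k_1}(\mu_i)g_{k_2}(\mu_i)(U^{\mathrm{T}}x)_i^2$, repackage that sum as an integral against the weight $f=\vartriangle F/\vartriangle\mu$, and read off orthonormality from $H=I$. Your explicit treatment of $f$ as an atomic (Dirac/Riemann--Stieltjes) measure, and your remarks on positive-definiteness and repeated eigenvalues, are slightly more careful than the paper's informal ``Riemann sum'' phrasing, but the argument is the same.
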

\end{tcolorbox}

\paragraph{Unachievable Algorithm Towards Optimal Basis.}
Now we illustrate why \citet{Wang2022jacobi} believe that
though properly defined, this optimal basis is unachievable,     
and how they took a step back to get their final model. 
We summarize the process they thought of in Algorithm \ref{alg:unreacheable}. 
This process is quite habitual:  
with the weight function in Proposition~\ref{prop:exact_weight} solved,  
it is natural to use it to determine the first $K$ polynomials by the Gram-Schmidt process 
and then use the solved polynomials in filtering as other bases, e.g. Chebyshev polynomials. 
This process is unreachable due to the eigendecomposition step, 
which is essential for the calculation of $f$ (see Proposition~\ref{prop:exact_weight}), 
but prohibitively expensive for larger graphs. 

\begin{algorithm}[tb]
    \SetAlgoLined
    \KwIn{Graph signal $x$; 
            Normalized graph adjacency $\hat{P}$; 
            Truncated polynomial order $K$
            } 
    \KwOut{Optimal basis ${\{g_k(\cdot)}\}_{k=0}^{K}$}
    \BlankLine
    \Indp 
    {$U, \{ \mu_i \}_{i = 1}^N \leftarrow$} Eigendecomposition of {$\hat{P}$}
    \\
    Calculate $f(\mu)$ as descripted in
    Proposition~\ref{prop:exact_weight}
    \\
    Use Gram-Schmidt process and weight function $f (\mu)$ to contruct an orthonormal basis $\{ g_k \}_{k = 0}^K$
    \\
    Apply $\{ g_k \}_{k = 0}^K$ in polynomial filtering
    \caption{(An Unreachable Algorithm for Utilizing Optimal Basis)}
    \label{alg:unreacheable}
\end{algorithm}

As a result, 
\citet{Wang2022jacobi} came up with a compromise. They allow their
model, namely JacoviConv, to choose from the family of orthogonal Jacobi bases, 
who have "{\tmem{flexible enough weight functions}}'', 
i.e., $(1 - \mu)^{a} (1 + \mu)^{b} \left(\forall a,b\in \left(0,1\right)  \right)$. 
In their implementation,  $a$ and $b$ are discretized and chosen via hyperparameter tuning. 
Obviously, the fraction of possible weight functions JacoviConv can cover is still small, 
very possibly missing the optimal weight function in Proposition~\ref{prop:exact_weight}.


\subsection{OptBasisGNN}

In this section, we show how the polynomial filter 
can employ the optimal basis in Definition~\ref{def:opt_basis} 
efficiently via an innovative $O (K | E |)$ methodology. 
Our method does not follow the convention in  
Algorithm \ref{alg:unreacheable} 
where four progressive steps  
are included to solve the optimal polynomial bases out and utilize them.   
Instead, our solution to the optimal bases is implicit, 
accompanying the process of solving a related vector series. 
Thus, our method bypasses the untractable eigendecomposition step.

\textbf{Optimal Vector Basis with Accompanying Polynomials.\quad} 
Still, we consider graph signal filtering on one channel, 
that is,
$x \rightarrow z = \sum_{k = 0}^K \alpha_k g_k (\hat{P}) x$.
Instead of taking the matrix polynomial 
$
b (\hat{P}) = \sum_{k = 0}^K \alpha_k g_k (\hat{P})
$
as a whole, 
we now regard 
$
\{ v_k  | \nobracket v_k \assign g_k (\hat{P}) x \}_{k =0}^K
$
as a \textit{vector basis}. 
Then the filtered signal $z$ is a linear combination of 
the vector basis, namely 
$
\mymathinlinehl{
x \rightarrow z = \sum_{k=0}^{K} \alpha_k v_k
~\refstepcounter{equation}(\theequation)
\label{eq:vecbasis}
}
$.
When $\{ g_k \}_{k=0}^{K}$ meets Definition \ref{def:opt_basis}, 
for all $k_1, k_2 \in [0, K]$, 
the vector basis satisfies:
\begin{align}
  v_{k_2}^\mathrm{T} v_{k_1} = x^\mathrm{T} g_{k_2} (\hat{P}) g_{k_1} (\hat{P}) x = \delta_{k_1
   k_2}.
   \label{eq:optvecbasis}
\end{align}

Given $(\hat{P}, x)$, 
we term $g_k$ the \textit{\textbf{accompanying polynomial}} of a vector $v_k$
if $v_k = g_k(\hat{P})x$. 
Note that an accompanying polynomial does not always exists for any vector. 
Following Equation~\eqref{eq:optvecbasis}, finding the optimal \textit{polynomial} basis for filtering  
is equivalent to finding a \textit{vector} basis $\{ v_k \}_{k = 0}^K$ that 
satisfies two conditions: 
\textbf{Condition 1}: Orthonormality; 
\textbf{Condition 2}:  Accompanied by the optimal polynomial basis, that is, 
$v_k \equiv g_k(\hat{P})x$ establishes for each $k$, 
where $g_k$ follows Definition~\ref{def:opt_basis}.
We term such $\{v_k\}$ the optimal vector basis.

\begin{algorithm}[t]
    \SetAlgoNoLine
    \caption{
        \textsc{OptBasisFiltering} \\ 
        {
          \footnotesize 
          \linespread{0.8} 
          \textbf{1.} In the comment, we write the implicitly undergoing process of obtaining the accompanying optimal polynomial basis.
          \\
          \textbf{2.} Steps 1-3 will be further substituted by Algorithm~\ref{alg:nextbasis} after the derivative of Proposition~\ref{prop:onlytwo}. 
        }
    }
    \KwIn{
        Input signals $X$ with $d$ channels; 
        Normalized graph adjacency $\hat{P}$;
        Order $K$
        }
    \Parameter{$\alpha$}
    \KwOut{Filtered signals $Z$} 
    \BlankLine
    \For{$l=0$ \KwTo $d-1$}{ 
        $x \leftarrow X_{:, l}$ 
        \\ $v_0 \leftarrow x / \| x \|$ \tcp*[r]{$g_{0}(\mu)=1/\|x\|$} 
        $z \leftarrow \alpha_{0, l} v_0$  
        \\ \For{$k=0$ \KwTo $K$}{
            Step $1$: 
            $v_{k + 1}^{\ast} \leftarrow \hat{P} v_k$ 
            \tcp*[r]{$g_{k+1}^{\ast}(\mu) \assign \mu g_{k}(\mu)$}
            
            Step $2$: 
            $ v_{k + 1}^{\bot} \leftarrow v_{k + 1}^{\ast} -
              \sum_{i=0}^{k}\langle v_{k + 1}^{\ast}, v_i \rangle v_i 
              $ \tcp*[r]{$g_{k+1}^{\bot}(\mu) \assign g_{k+1}^{\ast}(\mu)
            - \sum_{i=0}^{k}\langle v_{k + 1}^{\ast}, v_i \rangle  g_{i}(\mu)
            $}

            Step $3$: 
            $v_{k + 1} \leftarrow v_{k + 1}^{\bot} / \| v_{k +
              1}^{\bot} \|$ \tcp*{$g_{k+1}(\mu) \assign g_{k+1}^{\bot}(\mu) / 
              \| v_{k + 1}^{\bot}\|
              $}
              
            $z \leftarrow z + \alpha_{k + 1, l} v_{k + 1}$
        }
        $Z_{:, l} \leftarrow z$ 
    }
    \KwRet{Z} 
    \label{alg:OptBasisFilteringRaw}
\end{algorithm}

When focusing solely on Condition 1, one can readily think of the fundamental Gram-Schmidt process,   
which generates a sequence of orthonormal vectors through a series of iterative steps: each subsequent basis vector is derived by 1) orthogonalization with respect to \textit{all} the previously obtained vectors, and 2) normalization. 

Moreover, with a slight generalization, Condition 2 can also be met. 
As illustrated in our \textsc{OptBasisFiltering} algorithm (Algorithm~\ref{alg:OptBasisFilteringRaw}), 
besides Steps 2-3 taken directly from the Gram-Schmidt process to ensure orthonormality, 
there is an additional Step 1 that guarantees the existence of the \textit{subsequent accompanying polynomial}.
To show this, we can write out the accompanying polynomial in each step. 
Inductively, assuming that the accompanying polynomials 
for the formerly obtained basis vectors are $g_0,\cdots, g_k$, 
we can observe immediately from the algorithmic flow that the $(k+1)$-th 
accompanying polynomial is
\begin{align}
  g_{k+1}(\mu) := 
  \big( \mu g_k(\mu) - \sum_{i=0}^{k} \langle v_k^{\ast}, v_i \rangle  g_i(\mu) \big) / {\|v_{k+1}^{\bot}\|}, 
\label{eq:implicit-poly-raw}
\end{align}
with $g_0(\mu) = 1 / \|x\|$ as the initial step.
Since for each $(k_1, k_2)$, 
$x^\mathrm{T} g_{k_2} (\hat{P}) g_{k_1} (\hat{P}) x = v_{k_1}^\mathrm{T} v_{k_2} = \delta_{k_1 k_2}$ establishes, 
the sequence $g_0,\cdots, g_K$
is exactly the optimal basis in Definition~\ref{def:opt_basis}.
Thus, by solving the vectors in the optimal vector basis in order   
and at the same time apply them in filtering by Equation~\eqref{eq:vecbasis}, 
\textbf{we can make implicit yet exact use of the optimal polynomial basis}. 
Thus, 
we can make implicit and exact use of the optimal polynomial basis 
via solving the optimal vector basis and applying them by Equation~\eqref{eq:vecbasis}. 
The cost, due to the recursive conducting over Step 2 until $v_K$ is obtained, 
is in total $O (K|E|+K^2|V|)$. 

\begin{remark}
  It is revealed by Equation~\eqref{eq:implicit-poly-raw} that 
  we have in fact provided an \textit{alternative solution} to the optimal basis.   
  However, notice that we never need to explicitly compute the polynomial series.
\end{remark}

\textbf{Achieving $O(K|E|\hspace{-1mm}+\hspace{-1mm}K|V|)$ Time Complexity.\quad} 
We can further reduce the cost to $O (K|E|+K|V|)$ 
by Proposition~\ref{prop:onlytwo}, 
which shows that in Step 2, instead of subtracting all the former vectors, 
we just need to subtract $v_k$ and $v_{k - 1}$ from $v_{k + 1}^{\ast}$.

\begin{tcolorbox}[boxrule=0.pt,height=12mm,valign=center,colback=blue!3!white]
  \begin{proposition}
      In Algorithm \ref{alg:OptBasisFilteringRaw}, $v^{\ast}_{k + 1}$ is only denpendent with $v_k$ and $v_{k - 1}$.
    \label{prop:onlytwo}
    \end{proposition}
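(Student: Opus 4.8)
The plan is to unwind what the proposition really asserts. Since $v_{k+1}^{\ast}=\hat{P}v_k$ depends on $v_k$ by definition, the content of the claim lives entirely in Step~2 of Algorithm~\ref{alg:OptBasisFilteringRaw}: I want to show that the orthogonalization coefficients $\langle v_{k+1}^{\ast}, v_i\rangle$ vanish for every $i\le k-2$, so that the sum $\sum_{i=0}^{k}\langle v_{k+1}^{\ast},v_i\rangle v_i$ collapses to just the $i=k$ and $i=k-1$ terms. This is exactly the vector-space incarnation of the classical three-term recurrence for orthonormal polynomials, so I expect the proof to mirror the standard argument for that recurrence.

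First I would substitute $v_{k+1}^{\ast}=\hat{P}v_k$ into the coefficient, giving $\langle v_{k+1}^{\ast},v_i\rangle=\langle \hat{P}v_k, v_i\rangle$, and then use the symmetry of $\hat{P}$ (it is the symmetric normalized adjacency matrix, so $\hat{P}^{\mathrm{T}}=\hat{P}$) to rewrite this as $\langle v_k, \hat{P}v_i\rangle$. This move of shifting the $\hat{P}$ off $v_k$ and onto $v_i$ is the crux. Next I would bound the span of $\hat{P}v_i$: by the accompanying-polynomial bookkeeping carried in the algorithm's comments, $v_i=g_i(\hat{P})x$ with $\deg g_i=i$, so $\hat{P}v_i$ has accompanying polynomial $\mu g_i(\mu)$ of degree $i+1$. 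Because $g_0,\dots,g_{i+1}$ have strictly increasing degrees (Remark~\ref{remark:increasing_order}), they form a basis for all polynomials of degree at most $i+1$; expanding $\mu g_i(\mu)$ in this basis and applying the result to $x$ shows $\hat{P}v_i\in\mathrm{span}\{v_0,\dots,v_{i+1}\}$. Finally, since $\{v_j\}$ is orthonormal (Condition~1), $v_k$ is orthogonal to $v_0,\dots,v_{i+1}$ whenever $k>i+1$; hence $\langle v_k,\hat{P}v_i\rangle=0$ for all $i\le k-2$, which is precisely what I set out to prove.

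The main obstacle is the middle step, namely cleanly certifying that $\hat{P}v_i$ never escapes $\mathrm{span}\{v_0,\dots,v_{i+1}\}$. The degree argument makes this intuitively transparent, but it has to be tied to the inductive structure of the algorithm so that the statement ``$\hat{P}v_i$ has an accompanying polynomial of degree $i+1$'' is genuinely justified rather than assumed; equivalently, one can argue that $\mathrm{span}\{v_0,\dots,v_j\}$ coincides with the Krylov subspace $\mathrm{span}\{x,\hat{P}x,\dots,\hat{P}^{j}x\}$ and reason with that. By contrast, the symmetry of $\hat{P}$ and the orthonormality of the $v_j$ are immediate, so once the span containment is pinned down the conclusion follows in one line.
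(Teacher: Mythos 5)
Your proof is correct and follows essentially the same route as the paper's: use the symmetry of $\hat{P}$ to rewrite $\langle v_{k+1}^{\ast}, v_i\rangle$ as $\langle v_k, \hat{P}v_i\rangle$, show $\hat{P}v_i \in \mathrm{span}\{v_0,\dots,v_{i+1}\}$, and conclude by orthonormality. The only difference lies in the middle step: the paper obtains the span containment directly by inverting Steps 1--3 of the algorithm (solving for $\hat{P}v_i = \|v_{i+1}^{\bot}\|\, v_{i+1} + \sum_{j\le i}\langle v_{i+1}^{\ast}, v_j\rangle v_j$), which sidesteps the accompanying-polynomial degree bookkeeping that you yourself flag as the main obstacle in your version.
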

\end{tcolorbox}

\begin{proof}
Please check Appendix~\ref{sec:proof-of-vec3term}.
\end{proof}

\begin{remark}
The proof is hugely inspired by the core proof part of the Theorem \ref{thm:3term}
(Appendix \ref{sec:proof-of-3term}, the three-term recurrences theorem for orthogonal
polynomials), which shows that 
{ $x p_k (x) $is only relevant to $p_{k + 1}
(x)$, $p_k (x)$ and $p_{k - 1} (x)$}. 
The difference is just a shift of consideration of 
the inner-product space from polynomials to vectors.
\end{remark}

\begin{algorithm}[t]
    \SetNoFillComment
    \SetAlgoNoLine
    \DontPrintSemicolon
    \KwIn{
        Normalized graph adjacency $\hat{P}$; 
        \textbf{Two} solved basis vectors $v_{k - 1}, v_k$ ($k \geq 0$)
    }
    \KwOut{$v_{k + 1}$} 
    \BlankLine
    \Indp
    Step $1$: $v_{k + 1}^{\ast} \leftarrow \hat{P} v_k$ 
    
    Step $2$: $v_{k + 1}^{\bot} \leftarrow v_{k + 1}^{\ast} -
      \langle v_{k + 1}^{\ast}, v_k \rangle v_k - \langle v_{k + 1}^{\ast}, v_{k - 1}\rangle v_{k-1}$ 

    Step $3$: $v_{k + 1} \leftarrow v_{k + 1}^{\bot} / \| v_{k +
      1}^{\bot} \|$ 
    
    \KwRet{$v_{k + 1}$}
    \caption{
        \textsc{ObtainNextBasisVector} 
    }
    \label{alg:nextbasis}
\end{algorithm}

By Proposition~\ref{prop:onlytwo}, 
we substitute Steps 1-3 in Algorithm~\ref{alg:OptBasisFilteringRaw}
by Algorithm~\ref{alg:nextbasis}.
Note that 
we define $v_{-1}:=\vec{0}, g_{-1}(\mu):=0$
for consistency and simplicity of presentation.
The improved \textsc{OptBasisFiltering} algorithm 
serves as the core part of the complete OptBasisGNN.
The processes on all channels are conducted in parallel.
Please check the Pytorch-style pseudo-code in Appendix~\ref{sec:pseudo_torch_OptBasis}.

\subsection{More on the Implicitly Solved Polynomial Basis}

This section is a more in-depth discussion about the nature of our method, that is, we implicitly determine the optimal polynomials by \textit{three-term recurrence relations} rather than the \textit{weight function}.  

We begin with a lemma. The proof can be found in Appendix~\ref{sec:proof-of-consistent-equation}.

\begin{lemma}
  In Algorithm~\ref{alg:nextbasis}, 
  $\|v^{\bot}_{k}\|  = \langle v_{k + 1}^{\ast}, v_{k-1} \rangle$
  \label{lemma:consistent}.
\end{lemma}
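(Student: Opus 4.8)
The plan is to exploit the symmetry of $\hat{P}$ together with the orthonormality of the vectors $\{v_i\}$ produced by the algorithm. The starting observation is that the right-hand side $\langle v_{k+1}^{\ast}, v_{k-1}\rangle$ can be rewritten using Step~1: since $v_{k+1}^{\ast} = \hat{P} v_k$ and $\hat{P}$ is symmetric, I would move $\hat{P}$ across the inner product to obtain
\[
\langle v_{k+1}^{\ast}, v_{k-1}\rangle = \langle \hat{P} v_k, v_{k-1}\rangle = \langle v_k, \hat{P} v_{k-1}\rangle = \langle v_k, v_k^{\ast}\rangle,
\]
where the last equality applies Step~1 at the previous index, $v_k^{\ast} = \hat{P} v_{k-1}$. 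This reduces the claim to showing $\langle v_k, v_k^{\ast}\rangle = \|v_k^{\bot}\|$.

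Next I would unfold the definition of $v_k^{\ast}$ coming from the iteration that produced $v_k$. From Step~2 applied one index earlier, $v_k^{\bot} = v_k^{\ast} - \langle v_k^{\ast}, v_{k-1}\rangle v_{k-1} - \langle v_k^{\ast}, v_{k-2}\rangle v_{k-2}$, so solving for $v_k^{\ast}$ gives $v_k^{\ast} = v_k^{\bot} + \langle v_k^{\ast}, v_{k-1}\rangle v_{k-1} + \langle v_k^{\ast}, v_{k-2}\rangle v_{k-2}$. I would then take the inner product of this expression with $v_k$. Because $\{v_i\}$ is orthonormal (guaranteed by the Gram--Schmidt normalization in Step~3 together with Proposition~\ref{prop:onlytwo}), the terms $\langle v_k, v_{k-1}\rangle$ and $\langle v_k, v_{k-2}\rangle$ vanish, leaving only $\langle v_k, v_k^{\bot}\rangle$. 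Finally I would invoke the normalization relation $v_k^{\bot} = \|v_k^{\bot}\| v_k$ from Step~3, so that $\langle v_k, v_k^{\bot}\rangle = \|v_k^{\bot}\|\,\langle v_k, v_k\rangle = \|v_k^{\bot}\|$. Chaining these equalities gives $\langle v_{k+1}^{\ast}, v_{k-1}\rangle = \langle v_k, v_k^{\ast}\rangle = \|v_k^{\bot}\|$, as desired.

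The points requiring care are purely organizational rather than substantive. First, I must keep the index bookkeeping straight: the $v_k^{\ast}$ and $v_k^{\bot}$ appearing here are exactly those generated in the earlier step that produced $v_k$, not in the current step. Second, the base cases need a quick check—when $k$ is small the convention $v_{-1} := \vec{0}$ (and the absence of $v_{-2}$) makes some subtracted terms drop out, and I expect the formula to stay consistent without extra work. The only genuinely load-bearing ingredient is the orthonormality of the full set $\{v_i\}$, which in the two-term Algorithm~\ref{alg:nextbasis} holds \emph{because} Proposition~\ref{prop:onlytwo} ensures $v_{k+1}^{\ast}$ is already orthogonal to $v_0,\dots,v_{k-2}$; thus the abbreviated Step~2 reproduces the full Gram--Schmidt orthogonalization. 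I anticipate no real obstacle beyond making this dependence explicit.
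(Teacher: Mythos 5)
Your proposal is correct and follows essentially the same route as the paper's proof: both use the symmetry of $\hat{P}$ to rewrite $\langle v_{k+1}^{\ast}, v_{k-1}\rangle$ as $\langle v_k, \hat{P}v_{k-1}\rangle$, and both identify $\|v_k^{\bot}\|$ with $\langle v_k, v_k^{\ast}\rangle$ via the orthogonality of $v_k$ to the subtracted terms and the normalization $v_k = v_k^{\bot}/\|v_k^{\bot}\|$. The only difference is presentational — the paper computes $\|v_{k+1}^{\bot}\|$ and shifts the index down, while you work directly at index $k$ and spell out the steps the paper leaves implicit.
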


This lemma soonly leads to the following theorem.\footnote{Here,  $x$ is the input signal.}

\begin{tcolorbox}[boxrule=0.pt,height=52mm,valign=top,colback=blue!3!white]
\begin{theorem}
[Three-term Recurrences of Accompanying Polynomials (Informal)]The process for deriving the vector basis  
correspondingly defines the optimal polynomial basis 
through the following \textit{three-term relation}: 
\begin{align*}
    & \mymathinlinehl{\| v^{\bot}_{k+1}\| }
      g_{k + 1} (\mu) 
      = ( \mu - \langle v_{k + 1}^{\ast},  v_k \rangle) g_k (\mu) 
     \\
    &\quad\quad\quad\quad\quad\quad\quad\quad\quad
    - \mymathinlinehl{\|v^{\bot}_{k}\| } g_{k - 1} (\mu), 
    \\
    &\quad g_{-1}(\mu) \assign 0, \ g_0 (\mu)  = 1 / \|x\|,
    \\ 
     &\quad k = 0,\cdots,K\minus 1.
\end{align*}
\label{thm:3term-implicit}
\end{theorem}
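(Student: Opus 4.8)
The plan is to read the recurrence directly off the algorithmic steps in Algorithm~\ref{alg:nextbasis}, translating each vector operation into its accompanying-polynomial counterpart, and then to invoke Lemma~\ref{lemma:consistent} to symmetrize the coefficients. The central bridge is the accompanying relation $v_k = g_k(\hat{P})x$: since polynomials of $\hat{P}$ commute, Step~1 gives $v_{k+1}^{\ast} = \hat{P} v_k = \hat{P}\, g_k(\hat{P})x$, so the accompanying polynomial of $v_{k+1}^{\ast}$ is $g_{k+1}^{\ast}(\mu) = \mu\, g_k(\mu)$, the identity polynomial $\mu$ acting exactly as $\hat{P}$ does on the eigenbasis.

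First I would establish, by induction on $k$, that each vector produced by the algorithm admits an accompanying polynomial, exactly as laid out before Equation~\eqref{eq:implicit-poly-raw}, taking $g_0(\mu) = 1/\|x\|$ from the initialization $v_0 = x/\|x\|$ together with the conventions $v_{-1} := \vec{0}$ and $g_{-1}(\mu) := 0$. I would then translate Step~2 into polynomial space: because the accompanying map is linear, the vector $v_{k+1}^{\bot} = v_{k+1}^{\ast} - \langle v_{k+1}^{\ast}, v_k \rangle v_k - \langle v_{k+1}^{\ast}, v_{k-1}\rangle v_{k-1}$ has accompanying polynomial $g_{k+1}^{\bot}(\mu) = \mu g_k(\mu) - \langle v_{k+1}^{\ast}, v_k\rangle g_k(\mu) - \langle v_{k+1}^{\ast}, v_{k-1}\rangle g_{k-1}(\mu)$. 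Finally, the normalization in Step~3 divides by $\|v_{k+1}^{\bot}\|$, giving $g_{k+1}(\mu) = g_{k+1}^{\bot}(\mu)/\|v_{k+1}^{\bot}\|$, i.e.
\[
\|v_{k+1}^{\bot}\|\, g_{k+1}(\mu) = \big(\mu - \langle v_{k+1}^{\ast}, v_k\rangle\big)\, g_k(\mu) - \langle v_{k+1}^{\ast}, v_{k-1}\rangle\, g_{k-1}(\mu).
\]

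To reach the stated symmetric form, the remaining step is to replace the coefficient $\langle v_{k+1}^{\ast}, v_{k-1}\rangle$ of $g_{k-1}$ by $\|v_k^{\bot}\|$, which is precisely Lemma~\ref{lemma:consistent}. This substitution is the crux: it makes the coefficient of $g_{k-1}$ in the $k$-th recurrence equal to the normalization constant $\|v_k^{\bot}\|$ used one step earlier, reproducing the characteristic shape of the orthonormal three-term recurrence in Theorem~\ref{thm:3term_orthonormal}, with $\|v_{k+1}^{\bot}\|$ playing the role of $\sqrt{\beta_{k+1}}$ and $\langle v_{k+1}^{\ast}, v_k\rangle$ the role of $\gamma_k$.

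I expect the only genuine obstacle to be Lemma~\ref{lemma:consistent} itself, established separately in Appendix~\ref{sec:proof-of-consistent-equation}; granted that lemma, the theorem is a line-by-line reading of Algorithm~\ref{alg:nextbasis} in the accompanying-polynomial picture. A minor point to verify along the way is that the two-term subtraction in Step~2 is exhaustive, i.e. $\langle v_{k+1}^{\ast}, v_i\rangle = 0$ for all $i \le k-2$, but this follows from Proposition~\ref{prop:onlytwo} together with the orthonormality of the $v_i$.
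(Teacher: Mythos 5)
Your proposal is correct and follows essentially the same route as the paper's proof: read the recurrence off the algorithm via Equation~\eqref{eq:implicit-poly-raw}, truncate the Gram--Schmidt sum to two terms using Proposition~\ref{prop:onlytwo}, and then apply Lemma~\ref{lemma:consistent} to rewrite the coefficient $\langle v_{k+1}^{\ast}, v_{k-1}\rangle$ as $\|v_k^{\bot}\|$. The induction on the existence of accompanying polynomials that you make explicit is exactly the content the paper carries over from the discussion preceding Equation~\eqref{eq:implicit-poly-raw}.
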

\end{tcolorbox}

\begin{proof}
Combining Proposition~\ref{prop:onlytwo}, 
the accompanyingly derived basis polynomial in Equation~\eqref{eq:implicit-poly-raw} comes to 
\begin{align*}
  {\|v^{\bot}_{k+1}\|} g_{k+1}(\mu) = \mu  g_{k}(\mu)
  -\sum_{i=k-1}^{k} \langle v_{k + 1}^{\ast}, v_i \rangle  g_{k}(\mu). 
\end{align*}
By employing Lemma~\ref{lemma:consistent} on the right-hand side 
and staking the steps, 
the proof is completed.
\end{proof}

This implicit recurring relation revealed in Theorem~\ref{thm:3term-implicit} perfectly matches the three-term formula in Equation~\eqref{eq:formula_orthonormal} 
if we substitute $\|v_k^{\bot}\|$ by $\sqrt{\beta_k}$, 
and $\langle v_{k+1}^{\ast}, v_k \rangle$ by $\gamma_k$. 
This is guaranteed by the orthonormality of the optimal basis 
(Proposition~\ref{prop:exact_weight}) 
and the three-term recurrence formula that constrains any orthonormal polynomial series (Theorem~\ref{thm:far-orthonormal}). 
From this perspective,
OptBasisGNN is a \textbf{particular case} of FavardGNN. 
FavardGNN is possible to reach the whole space of orthonormal bases, 
among which OptBasisGNN employs the ones that promise optimal convergence property.

Let us recall the Favard's theorem (Theorem~\ref{thm:3term_orthonormal}) and three-term recurrence theorem (Theorem~\ref{thm:far-orthonormal})
from a different perspective: 
An orthonormal polynomial series can be defined either through a \textit{weight function} or a \textit{recurrence relation} of a specific formula. 
We adopt the latter definition,    
bypassing the need for eigendecomposition as a prerequisite for the weight function.
Moreover, our adoption of such a way of definition is hidden behind the calculation of vector basis.


\begin{algorithm}[htp]
    \SetAlgoNoLine
    \KwIn{
        Input signals $X$ with $d$ channels; 
        Normalized graph adjacency $\hat{P}$;
        Order $K$
            }
    \Parameter{$\alpha$}
    \KwOut{Filtered signals $Z$} 
    \BlankLine
    $v_{-1} \leftarrow 0$\\
    \For{$l=0$ \KwTo $d-1$}{ 
        $x \leftarrow X_{:, l}$
        ,\  $v_0 \leftarrow x / \| x \| $
        ,\  $z \leftarrow \alpha_{0, l} v_0$
        \\ \For {$k=0$ \KwTo $K$}{
            $v_{k + 1}
            \leftarrow${\textsc{ObtainNextBasisVector}}($\hat{P}$,$v_k$, $v_{k - 1})$
        \\ $z \leftarrow z + \alpha_{k + 1, l} v_{k + 1}$
        }
        $Z_{:, l} \leftarrow z$
    }
    \KwRet{Z}
    \caption{\textsc{OptBasisFiltering}}
    \label{alg:OptBasisFiltering}
\end{algorithm}
\subsection{Scale Up OptBasisGNN}
\label{sec:scale-up}
\revised{
By slightly generalizing OptBasisGNN, it becomes feasible to scale it up for significantly larger graphs, such as ogbn-papers100M\cite{ogb}. 
This follows the approach of previous works that achieve scalability in GNNs by decoupling feature propagation from transformation~\cite{Chen2020GBP,Felix2019SGC,he2022chebii}.
}
We make several modifications to OptBasisGNN. First, we remove the MLP layer before \textsc{OptBasisFiltering}, resulting in the optimal basis vectors for all channels being computed in just one pass. 
Second, we preprocess the entire set of basis vectors ($V \in \mathbb{R}^{d\times(K+1)\times N}$) on CPU. 
Third, we adopt batch training, where for each batch of nodes $\mathcal{B}$, the corresponding segment of basis vectors $V[:,:,\mathcal{B}]$ is transferred to the GPU.

\section{Experiments}

\begin{table*}[htb]
  \tiny
  \caption{
  \textbf{Experimental results.}     
    {\tmem{Accuracies $\pm$ $95\%$
    confidence intervals}} are displayed for each model on each dataset. 
    The best-performing two results are highlighted. 
    The results
    of GPRGNN are taken from \citet{He2021bern}. The results of BernNet, ChebNetII and
    JacobiConv are taken from original papers. The results of FavardGNN and OptBasisGNN are the  average of repeating experiments over 20 cross-validation splits.
  }
  \centering
    \begin{tabular}{lllllll}
      \toprule
      Dataset & Chameleon & Squirrel & Actor  & Citeseer & Pubmed\\ 
      $\| V \|$ & 2,277 & 5,201 & 7,600 & 3,327 & 19,717\\
      $\mathcal{H} (G)$ & .23 & .22 & .22  & .74 & .80\\
      \midrule
      MLP & $46.59 \pm 1.84$ & $31.01 \pm 1.18$ & $40.18 \pm 0.55$ &
       $76.52 \pm 0.8$9 & $86.14 \pm 0.25$\\
      GCN~\cite{kipf2016semi} & $60.81 \pm 2.95$ & $45.87 \pm 0.8$ & $33.26 \pm 1.15$ &  
       $79.85 \pm 0.78$ & $86.79 \pm 0.31$\\
      ChebNet~\cite{Defferrard2016cheb} & $59.51 \pm 1.25$ & $40.81 \pm 0.42$ & $37.42 \pm 0.58$ &
       $79.33 \pm 0.57$ & $87.82 \pm 0.24$\\
      ARMA~\cite{arma2021bianchi} & $60.21 \pm 1.00$ & $36.27 \pm 0.62$ & $37.67 \pm 0.54$ & 
      $80.04 \pm 0.55$ & $86.93 \pm 0.24$\\
      APPNP~\cite{klicpera2019appnp} & $52.15 \pm 1.79$ & $35.71 \pm 0.78$ & $39.76 \pm 0.49$ & 
       $80.47 \pm 0.73$ & $88.13 \pm 0.33$
      \\  
      GPR-GNN~\cite{chien2021gprgnn} & $67.49 \pm 1.38 $ & $50.43 \pm 1.89$ & $39.91 \pm 0.62$ & 
       $80.13 \pm 0.8$4 & $88.46 \pm 0.31$
      \\
      BernNet~\cite{He2021bern} & $68.53 \pm 1.68$ & $51.39 \pm 0.92$ & $41.71 \pm 1.12$ & 
       $80.08 \pm 0.75$ & $88.51 \pm 0.39$
      \\
      ChebNetII~\cite{he2022chebii} & $71.37 \pm 1.01$ & $57.72 \pm 0.59$ &  ${41.75 \pm
      1.07}$ &  $80.53 \pm 0.79$ & $88.93 \pm
      0.29$
      \\
      JacobiConv~\cite{Wang2022jacobi} & \bestcell {${74.20 \pm 1.03}$} & $57.38 \pm 1.25$ & $41.17 \pm 0.64$ &  \bestcell $80.78 \pm 0.79$ & $89.62 \pm 0.41$
      \\
      \midrule
    FavardGNN & $72.32 \pm 1.90$ & \bestcell $63.49 \pm 1.47$ & $\bestcell {43.05 \pm 0.53}$ &  \bestcell $81.89 \pm 0.63$ & \bestcell $90.90 \pm 0.27$\\
OptBasisGNN & \bestcell $74.26 \pm 0.74$ & \bestcell $63.62 \pm 0.76$ &
      {\bestcell $42.39 \pm 0.52$} &  {$80.58 \pm 0.82$} & {\bestcell $90.30 \pm 0.19$}\\
      \bottomrule
    \end{tabular}
    \label{tbl:node_cls}
  \end{table*}
\begin{table*}[htb]
    \centering
    \tiny
    \caption{{\textbf{Experimental results} of large-scale datasets (non-homophilous).} {\tmem{Accuracies $\pm$ standard errors}} are displayed for each model on each dataset. 
    The best-performing two results are highlighted. Results of BernNet and ChebNet are taken from \citet{he2022chebii}. Other results are from \citet{Lim2021large}. \textbf{Note that} for the large Pokec and Wiki datasets, we use the \textit{scaled-up} version of OptBasisGNN, which is introduced in Section ~\ref{sec:scale-up}. 
    } 
    \begin{tabular}{llllll}
      \toprule
      Dataset & Penn94 & Genius & Twitch-Gamers & Pokec & Wiki\\
      $\| V \|$ & 41,554 & 421,961 & 168,114 & 1,632,803 & {1,925,342}\\
      $\| E \|$ & 1,362,229 & 984,979 & 6,797,557 & 30,622,564 & {303,434,860}\\
      $\mathcal{H} (G)$ & .470 & .618 & .545 & .445 & .389\\
      \midrule
      MLP & $73.61 \pm 0.40$ & $86.68 \pm 0.09$ & $60.92 \pm 0.07$ & 
      $62.37 \pm 0.02$ & $37.38\pm0.21$ \\
      GCN~\cite{kipf2016semi} & $82.47 \pm 0.27$ & $87.42 \pm 0.31$ & $62.18 \pm 0.26$ & 
      $75.45 \pm 0.17$ & OOM \\
      GCNII~\cite{chen2020gcnii} & $82.92 \pm 0.59$ & $90.24 \pm 0.09$ & $63.39 \pm 0.61$ &
       $78.94 \pm 0.11$ & OOM \\
      MixHop~\cite{abu2019mixhop} & $83.47 \pm 0.71$ & $90.58 \pm 0.16$ & $65.64 \pm 0.27$ &
       $81.07 \pm 0.16$ & $ 49.15 \pm 0.26$ \\
      LINK~\cite{Lim2021large} & $80.79 \pm 0.49$ & $73.56 \pm 0.14$ & $64.85 \pm 0.21$ & 
      $80.54 \pm 0.03$  & $57.11 \pm 0.26$  \\
      LINKX~\cite{Lim2021large} & $84.71 \pm 0.52$ & $90.77 \pm 0.27$ & \bestcell ${{66.06 \pm 0.19}}$
      &  $82.04 \pm 0.07$  & $59.80 \pm 0.41$ \\
      GPR-GNN~\cite{chien2021gprgnn} & $83.54 \pm 0.32$ & $90.15 \pm 0.30$ & $62.59 \pm 0.38$ &
      $80.74 \pm 0.22$  & $58.73 \pm 0.34$ \\
      BernNet~\cite{He2021bern} & $83.26 \pm 0.29$ & $90.47 \pm 0.33$ & $64.27 \pm 0.31$ &
      $81.67 \pm 0.17$  & $59.02 \pm 0.29$ \\
      ChebNetII~\cite{he2022chebii} & \bestcell {${84.86 \pm 0.33}$} & \bestcell {{$90.85 \pm
      0.32$}} & $65.03 \pm 0.27$ &  \bestcell {${82.33 \pm 0.28}$} & \bestcell {$60.95 \pm 0.39$} \\
      \midrule
      FavardGNN & \bestcell ${84.92±  \pm 0.41}$ &  $90.29 \pm 0.14$ & $64.26 \pm  0.12$ & - & - 
      \\
      OptBasisGNN &  ${84.85 \pm 0.39}$ & \bestcell ${{90.83 \pm 0.11}}$ &
      \bestcell {${65.17\pm 0.16}$} & \bestcell ${82.83 \pm 0.04}$ &  \bestcell ${61.85 \pm 0.03}$ 
      \\
      \bottomrule
    \end{tabular}
    \label{tbl:nonHomo}
  \end{table*}

In this section, we conduct a series of comprehensive experiments to demonstrate the effectiveness of the proposed methods.
Experiments consist of node classification tasks on small and large graphs, the learning of multi-channel filters, and a comparison of FavardGNN and OptBasisGNN.




\subsection{Node Classification}
\paragraph*{Experimental Setup.} 
We include medium-sized graph datasets conventionally used in preceding 
graph filtering works, 
including  
three heterophilic datasets (Chameleon, Squirrel, Actor) 
provided by \citet{Pei2020GeomGCN}
and
two citation datasets (PubMed, Citeseer)
provided by \citet{yang2016revisiting} and \citet{sen2008collective}
. 
For all these graphs, we take a $60\%/20\%/20\%$ train/validation/test 
split proportion following former works, e.g. \citet{chien2021gprgnn}. 
We report our results
of twenty runs over random splits with random initialization seeds. 
For baselines, we choose sota spectral GNNs.  
For other experimental settings, please refer to Appendix~\ref{expappendix:nodecls}.
Besides, for evaluation of OptBasisGNN, 
please also check the results in the scalability 
experimental section (Section \ref{sec:exp_scaleup}). 

\textbf{Results.\quad} 
As shown in Table \ref{tbl:node_cls}, 
FavardGNN and OptBasisGNN outperform most strong baselines. Especially, in Chameleon, Squirrel and Actor, we see a big lift.
The vast selection range and learnable nature of FavardGNN and the optimality of convergence provided by OptBasisGNN both enhance the performance of polynomial filters, and their performances hold flat. 

\subsection{Node Classification on Large Datasets}
\label{sec:exp_scaleup}
\paragraph*{Experimental Setup.}
We perform node classification tasks on 
two large citation networks: ogbn-arxiv and ogbn-papers100M \cite{ogb}, 
and five large non-homophilic networks from the LINKX datasets \cite{Lim2021large}
.
Except for Penn94, Genius and Twitch-Gamers, 
all other mentioned datasets use the scaled version of OptBasisGNN.

For ogbn datasets, 
we run repeating experiments on the given split with ten random model seeds, 
and choose baselines following the scalability experiments in ChebNetII \cite{he2022chebii}.
{For LINKX datasets}, 
we use the five given splits 
to align with other reported experiment results
for Penn94, Genius, Twitch-Gamer and Pokec.
For Wiki dataset, 
since the splits are not provided, 
we use five random splits. 
For baselines, we choose spectral GNNs 
as well as top-performing spatial  models reported by \citet{Lim2021large}, including LINK, LINKX, 
GCNII~\cite{chen2020gcnii} 
and MixHop~\cite{abu2019mixhop}.  
For more detailed experimental settings, please refer to Appendix \ref{expappendix:nodecls}.

\begin{table}[h]
    \centering
    \tiny
    \caption{\textbf{Experimental results} of large-scale datasets (ogbn-citation datasets).
    {\tmem{Accuracies $\pm$
    $95\%$ {standard errors}}} are displayed. Besides OptBasisGNN,
    all the reported results are taken from ChebNetII. The dash line in BernNet
    means failing in preprocessing basis vectors in 24 hrs. Fixed splits of
    train/validation/test sets are used. 10 random model seeds are used for
    repeating experiments.}
    \begin{tabular}{lll}
    \toprule
    Dataset & ogbn-arxiv & ogbn-papers100M \\
    $\| V \|$ & 169,343 & 111,059,956 \\
    $\| E \|$ & 1,166,243 & 1,615,685,872 \\
    $\mathcal{H} (G)$ & 0.66 & - \\
    \midrule
    GCN~\cite{kipf2016semi} & $71.74 \pm 0.29 $ & OOM  \\
    ChebNet~\cite{Defferrard2016cheb} & $71.12 \pm 0.22$ & OOM \\
    ARMA~\cite{arma2021bianchi} & $71.47 \pm 0.25$ & OOM \\
    GPR-GNN~\cite{chien2021gprgnn} & $71.78 \pm 0.18$ & $65.89 \pm 0.35$ \\
    BernNet~\cite{He2021bern} & $71.96 \pm 0.27$ & $-$ \\
    SIGN~\cite{sign2020Frasca} & $71.95 \pm 0.12$ & $65.68 \pm 0.16$ \\
    GBP~\cite{Chen2020GBP} & $71.21 \pm 0.17$ & $65.23 \pm 0.31$ \\
    NDLS*~\cite{zhang2021ndls} & $72.24 \pm 0.21$ & $65.61 \pm 0.29$ \\
    ChebNetII~\cite{he2022chebii} & \bestcell ${{72.32 \pm 0.23}}$ & \bestcell ${67.18
    \normalsize{\pm 0.32}}$ \\
    \midrule
    OptBasisGNN & \bestcell ${72.27 \pm 0.15}$ & \bestcell ${{67.22
    \pm 0.15}}$ \\
    \bottomrule
  \end{tabular}
  \label{tbl:ogbn}
\end{table}

\textbf{Results.\quad} As shown in Table \ref{tbl:nonHomo} and Table \ref{tbl:ogbn}, 
On Penn94, Genius and Twitch-gamer, 
our two models achieve 
comparable results to those of the state-of-the-art spectral methods. 
On ogbn datasets as well as Pokec and Wiki with tens or hundreds of millions of edges,   
we use the scaled version of OptBasisGNN with batch training. 
We do not conduct FavardGNN on these datasets, 
since the basis vectors of FavardGNN cannot be precomputed.  
Notably, on Wiki dataset, the largest non-homophilous dataset, our method surpasses the second top method by nearly one percent, this demonstrates the effectiveness of our scaled-up version of OptBasisGNN.

\subsection{Learning Multi-Channel Filters from Signals}


\textbf{Experimental Setup.\quad}
We extend the experiment of 
learning filters conducted by \citet{He2021bern} and \citet{Balcilar2021Analyzing}. 
The differences are twofold: 
First, we consider the case of \textit{multi-channel} input signals 
and learn filters \textit{channelwisely}.
Second, the \textit{only} learnable parameters are the coefficients 
$\alpha$.
Note that the optimization target of this experiment is identical to how the optimal basis was derived by \citet{Wang2022jacobi}
(See Section \ref{sec:SumWang_short}). 
\begin{table}[htb]
    \tiny
    \centering
    \caption{Illustration of our multichannel filter learning experiment.}
    \vskip 0.01in
    \begin{tabular}
      {p{1.6cm}p{1.6cm}p{1.6cm}}
      \toprule
      Original Image 
      & 
      Y: Band Reject
      
      Cb: :Low pass
      
      Cr: High Pass & Y: Low Pass
      
      Cb: Band Reject
      
      Cr: Band Reject\\
      \midrule
      \raisebox{0.0\height}{\includegraphics[width=1.5cm,height=1.5cm]{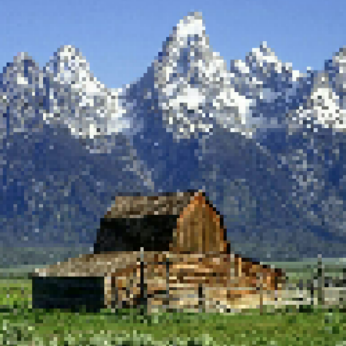}}
      &
      \raisebox{0.0\height}{\includegraphics[width=1.5cm,height=1.5cm]{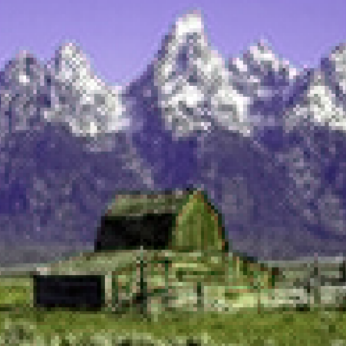}}
      &
      \raisebox{0.0\height}{\includegraphics[width=1.5cm,height=1.5cm]{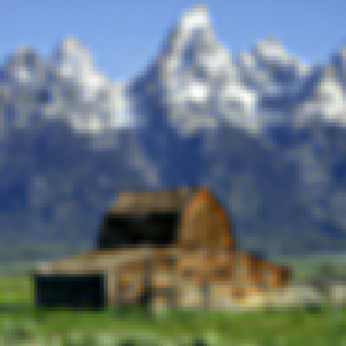}}\\
      \bottomrule
    \end{tabular}
    \label{tbl:filtering_less}
  \end{table}

We put the practical background of our multichannel experiment in 
YCbCr color space. 
Each $100\times 100$ image is considered as a grid graph
with input node signals on three channels: Y, Cb and Cr.
Each signal might be 
filtered by complex filtering operations defined in 
\cite{He2021bern}.  
As shown in Table \ref{tbl:filtering_less}, 
using different filters on each channel 
results in different combination effects.
We create a synthetic dataset 
with 60 samples from 15 original images. 
More about the synthetic dataset are in Appendix \ref{expappendix:regression}. 

Following \citet{He2021bern}, 
we use input signals $X$ and the true filtered signals $Y$ 
to supervise the learning process of $\alpha$. 
The optimization goal is to minimize $\frac{1}{2} \|Z-Y\|_2^2$, 
where $Z$ is the output multi-channel signal  defined in Equation~\eqref{eq:MultiChannelFilter}.
During training, 
we use an Adam optimizer with a learning rate of $0.1$ and a weight decay of 
 $5\mathrm{e}{-4}$.
We allow a maximum of $500$ epochs, 
and stop iteration when the difference of losses between two epochs 
is less than $1\mathrm{e}{-4}$.

For baselines, we choose the Monomial basis, Bernstein basis, 
Chebyshev basis (with Chebyshev interpolation) 
corresponding to GPR-GNN, BernNet and ChebNetII, respectively. 
We also include {arbitrary} orthonormal basis learned by Favard for comparison.
Note that, 
we learn \textit{different filters on each channel
for all baseline basis} for fairness.

\textbf{Results.} We exhibit 
the mean MSE losses with standard errors of the 60 samples achieved by different bases in Table~\ref{tbl:filter_all}. Optbasis, which promises the best convergence property, demonstrates an overwhelming advantage. 
A special note is needed that, 
the Monomial basis has \textit{not finished converging} at the maximum allowed $500$th epoch. In Section~\ref{sec:exp_compare}, we extend the maximum allowed epochs to 10,000, and use the slowly-converging Monomial basis curve as a counterpoint to the non-converging Favard curve. 

\begin{table}[b]
\centering
\caption{
Experimental results of the multichannel filtering learning task. 
\textit{MSE loss} $\pm$ \textit{standard errors}
of the 60 samples achieved by different bases are exhibited. 
}
\resizebox{\columnwidth}{!}{%
\begin{tabular}{llllll}
\toprule
{BASIS} & OptBasis                                                & ChebII & Bernstein & Favard & Monomial 
\\ \midrule
\begin{tabular}[c]{@{}l@{}}
{MSE}\\ $\pm$ \footnotesize{STDV}
\end{tabular} 
&
\begin{tabular}[c]{@{}l@{}}
\textbf{0.0058}\\ $\pm$ \footnotesize{\textbf{0.0157}}
\end{tabular} 
&    
\begin{tabular}[c]{@{}l@{}}
 $0.1501$\\ $\pm$ \footnotesize{0.2433}
\end{tabular} 
&    
\begin{tabular}[c]{@{}l@{}}
$0.4231$\\ $\pm$ \footnotesize{0.4918}
\end{tabular} 
& 
\begin{tabular}[c]{@{}l@{}}
$0.3175$\\ $\pm$ \footnotesize{0.2840}
\end{tabular} 
&   
\begin{tabular}[c]{@{}l@{}}
$3.9076$\\ $\pm$ \footnotesize{2.9263}
\end{tabular} 
\\ \bottomrule
\end{tabular}%
}
\label{tbl:filter_all}
\end{table}

Particularly, in Figure \ref{fig:regression}, we visualize 
the converging process on 
\textbf{one sample}. 
Obviously,  
OptBasis show \textbf{best convergence property} 
in terms of both the fastest speed and smallest MSE error. 
Check Appendix~\ref{expappendix:regression} for more samples.

\begin{figure}[h]
    \centering
    \tiny
    \includegraphics[width=0.7\linewidth]{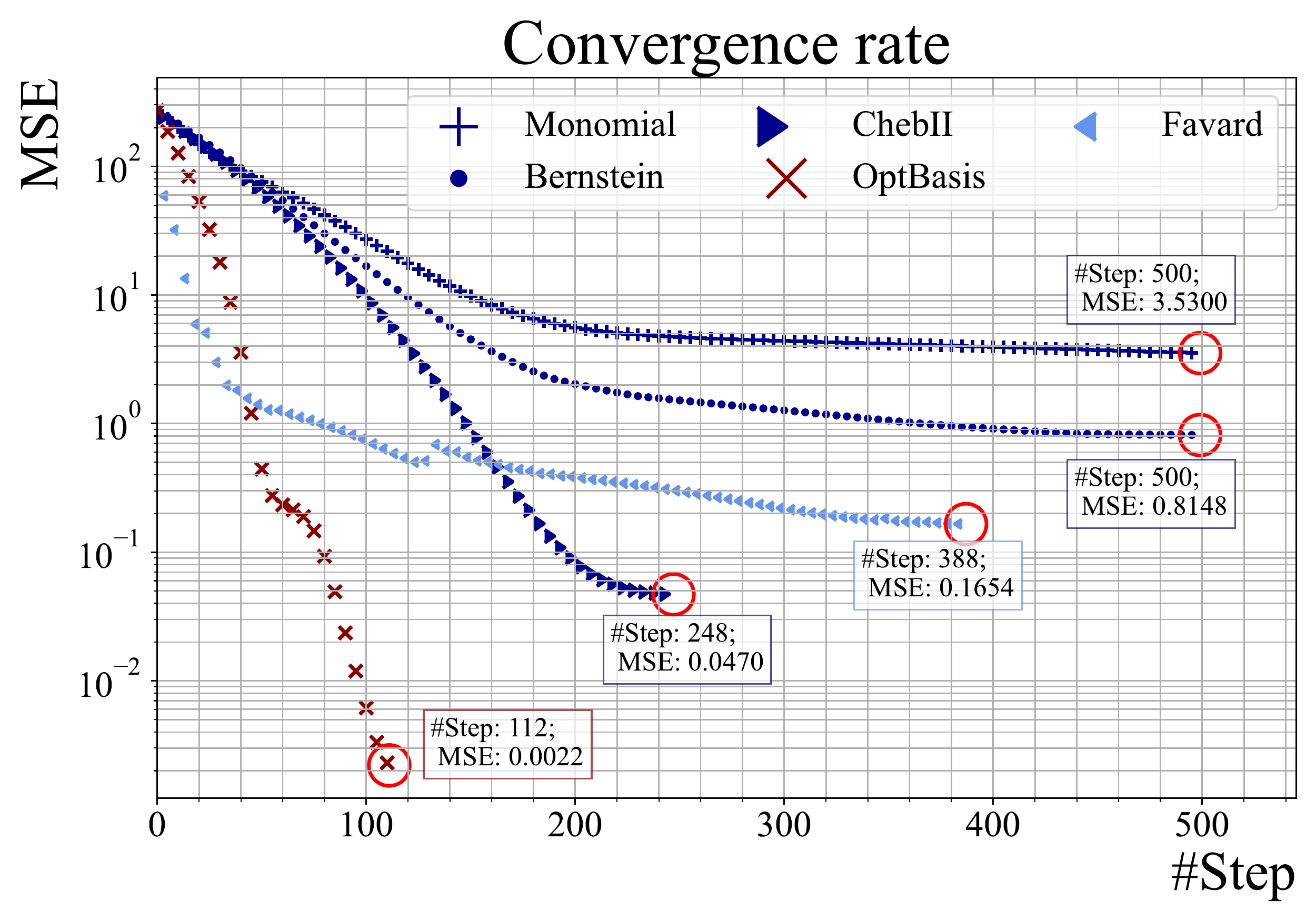}
    \caption{Convergence rate of minimizing $\frac{1}{2} \|Z-Y\|^2_{2}$ on one sample. 
    \textit{Sample message}: 
    The true filters for this sample are low-pass(Y) / band-reject(Cb) / band-reject(Cr). 
    \textit{Legends}: 
    ChebII means using Chebyshev polynomials combined with 
    interpolation on chebynodes as in ChebNetII \cite{he2022chebii}. Favard means the bases are learned as FavardGNN.  
    In 500 epochs, the experimental groups 
    of the Monomial basis and Bernstein basis did not converge. 
    OptBasis achieves 
    the smallest MSE error in the shortest time.
    }
    \label{fig:regression}
  \end{figure}


\subsection{Non-Convergence of FavardGNN}
\label{sec:exp_compare}

Notably, in Figure \ref{fig:regression},  
an obvious \textit{bump} appeared near the $130$th epoch. We now re-examine the non-convergence problem of FavardGNN (Section \ref{sec:weakness}). 
We rerun the multi-channel filter learning task by canceling early stopping and stretching the epoch number to 10,000. As shown in Figure~\ref{fig:regression_10000} (left), the curve of Favard bump several times. In contrast with Favard is the Monomial basis, though showing an inferior performance in Table~\ref{tbl:filter_all}, it converges slowly but stably. 
We observe a similar phenomenon with a node classification setup in Figure \ref{fig:regression_10000} (right) (See Appendix \ref{expappendix:bump} for details).
Still, very large bumps appear. Such a phenomenon might seem contradictory to the outstanding performance of FavardGNN in node classification tasks. We owe the good performances in Table~\ref{tbl:node_cls} and ~\ref{tbl:nonHomo} to the early stop mechanism. 

\begin{figure}[htp]
  \begin{subfigure}{.5\textwidth}
    \centering
    \includegraphics[width=.48\textwidth]{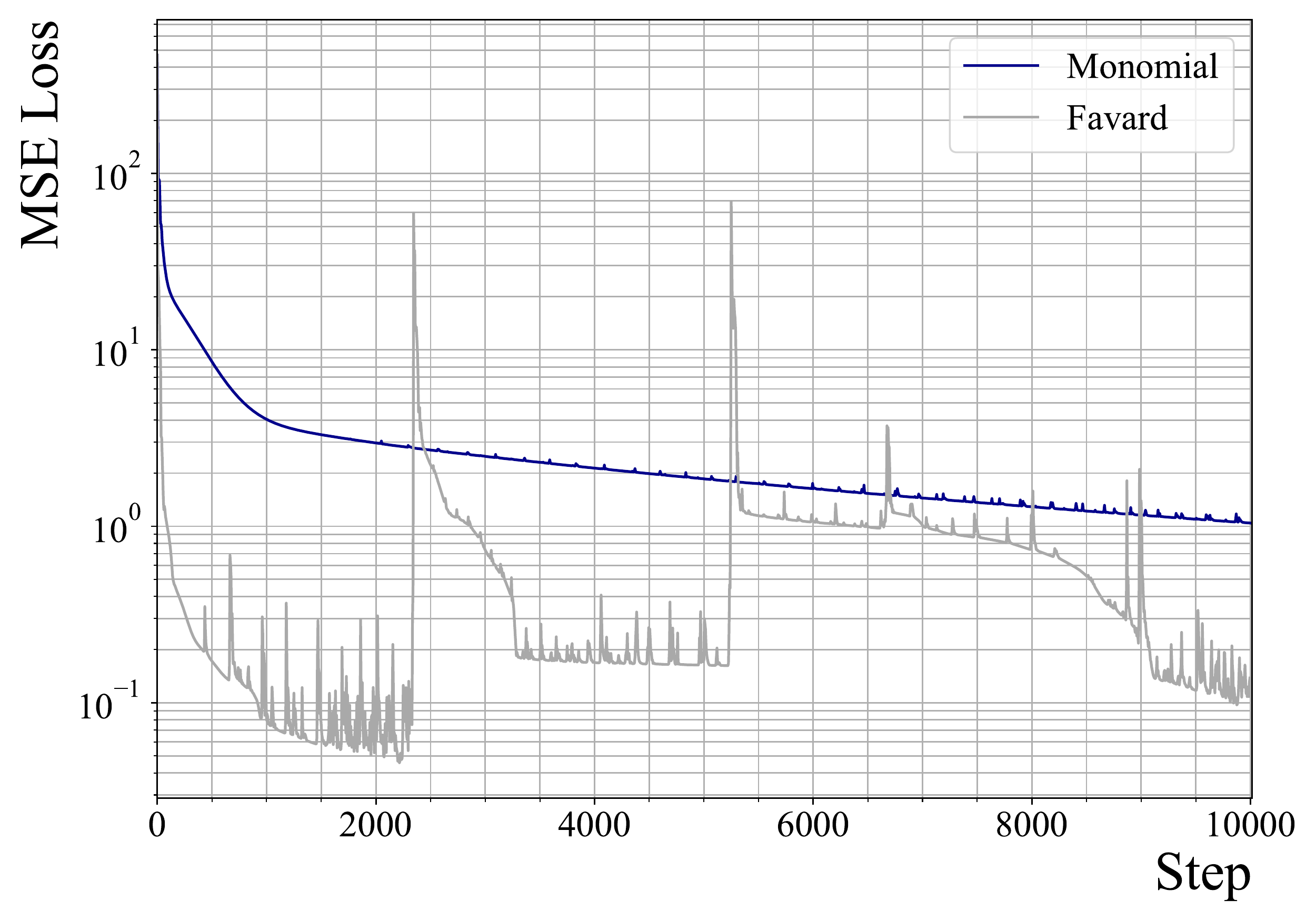}  
    \includegraphics[width=.48\textwidth]{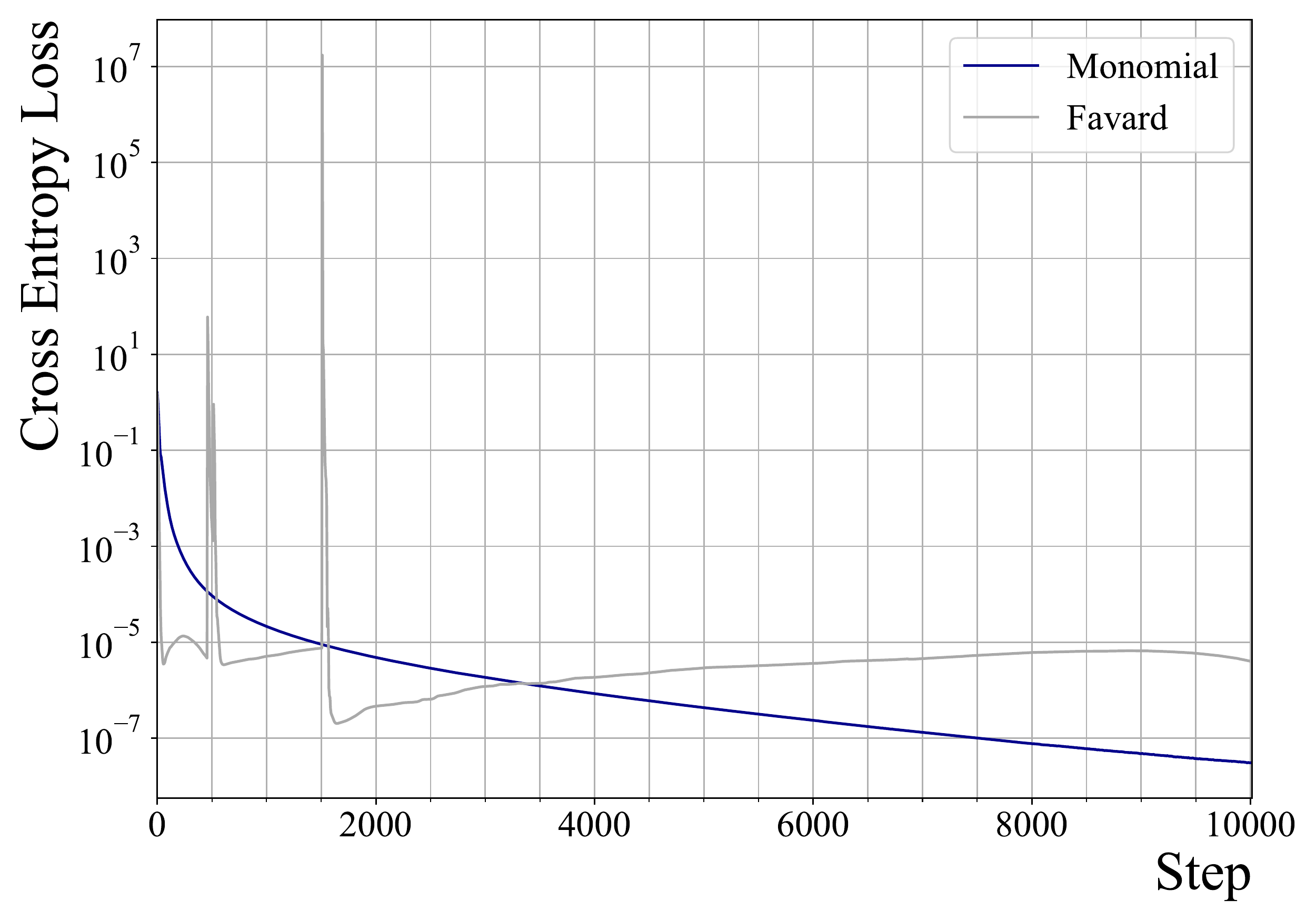}  
  \end{subfigure}
  \caption{Drop of loss in 10,000 epochs. \textit{Left}: MSE loss of regression task on one sample. 
  \textit{Right}: Cross entropy loss of classification problem on the Chameleon dataset. 
  Models based on \underline{Monomial basis} converge slowly, but stably. 
  while \underline{FavardGNNs} don't converge. 
  For the convergence curve for \underline{OptBasis}, please check Figure~\ref{fig:regression}. It converges much faster than Monomial Basis. 
  }
  \label{fig:regression_10000}
  \end{figure}

\section{Conclusion}
In this paper, 
we tackle the fundamental challenges of basis learning and computation in polynomial filters. 
We propose two models: FavardGNN and OptBasisGNN. 
FavardGNN learns arbitrary basis from the whole space of orthonormal polynomials, 
which is rooted in classical theorems in orthonormal polynomials.
OptBasisGNN leverages the optimal basis defined by \citet{Wang2022jacobi} efficiently, 
which was thought unsolvable.   
Extensive experiments are conducted to demonstrate the effectiveness of our proposed models. An interesting future direction is to derive a convex and easier-to-optimize algorithm for FavardGNN.

\section*{Acknowledgements}
This research was supported in part by the major key project of PCL (PCL2021A12), by National Natural Science Foundation of China (No. U2241212, No. 61972401, No. 61932001, No. 61832017), 
by Beijing Natural Science Foundation (No. 4222028), by Beijing Outstanding Young Scientist Program No.BJJWZYJH012019100020098, by Alibaba Group through Alibaba Innovative Research Program, and by Huawei-Renmin University joint program on Information Retrieval. We also wish to acknowledge the support provided by Engineering Research Center of Next-Generation Intelligent Search and Recommendation, Ministry of Education. Additionally, we acknowledge the support from Intelligent Social Governance Interdisciplinary Platform, Major Innovation \& Planning Interdisciplinary Platform for the “Double-First Class” Initiative, Public Policy and Decision-making Research Lab, Public Computing Cloud, Renmin University of China. 

\Urlmuskip=0mu plus 1mu \relax
\bibliography{./references.bib}
\bibliographystyle{icml2023}

\newpage
\appendix
\onecolumn
\section{Notations}


\label{sec:notations}
\begin{table}[H]
\caption{Summation of notations in this paper. }
\label{tbl:notations}
\begin{tabularx}{1.0\columnwidth}
  {lX}
    \toprule
    {\tmstrong{Notation}} & {\tmstrong{Description}}\\
    \hline
    $G = (V, E)$ & Undirected, connected graph\\
    $N$ & Number of nodes in $G$\\
    $\hat{P}$ & Symmetric-normalized adjacency matrix of $G$. \\
    $\hat{L}$ & Normalized Laplacian matrix of $G$. $\hat{L}$ = $I -
    \hat{P}$.\\
    $\lambda_i$ & The $i$-th eigenvalue of $\hat{L}$.\\
    $\mu_i$ & The $i$-th eigenvalue of $\hat{P}$. $\mu_i = 1 - \lambda_i .$\\
    $U$ & Eigen vectors of $\hat{L}$ and $\hat{P}$. \\
    \midrule
    $x$ & Input signal on $1$ channel.\\
    $X \in \mathbb{R}^{N \times d}$ & Input features / Input signals on $d$
    channels.\\
    $Z \in \mathbb{R}^{N \times d}$ & Filtered signals. \\
    $h (\cdot),$ $b (\cdot)$ & Filtering function defined on $\hat{L}$ and
    $\hat{P}$, respectively. $h (\lambda) \equiv b (1 - \lambda) .$\\
    $h_i (\cdot)$, $b_i (\cdot)$ & Filtering function on the $i$th signal
    channel. $X_{i, :} = h_i (\hat{L}) Z_{i, :}$.\\
    $h (\hat{L}) x,$ $b ( \hat{P}) x$ & Filtering operation on signal $x$. $h
    (\hat{L}) \equiv b ( \hat{P})$.\\
    \midrule
    $\{ g_k (\cdot) \}_{k = 0}^K$ & A polynomial basis of truncated order
    $K$.\\
    $\{ \alpha_k \}_{k = 0}^K$ & Coefficients above a basis. i.e. $h (\lambda)
    \approx \sum_{k = 0}^K \alpha_k g_k (\lambda) .$\\
    \bottomrule
  \end{tabularx}
\end{table}

\section{Proofs}
\label{sec:appendix-proofs}
Most subsections here are for the convenience of interested readers. 
We provided our proofs about the theorems (except for the original form of Favard's Theorem) and their relations used across our paper, although the theorems can be found in early chapters of monographs about orthogonal polynomials~\cite{gautschi2004orthogonal, simon2014spectral}. 
We assume a relatively minimal prior background in orthogonal polynomials.   


\subsection{Three-term Recurrences for Orthogonal Polynomials (With Proof)}
\label{sec:proof-of-3term}

\begin{theorem}[\textbf{Three-term Recurrences for Orthogonal Polynomials}]\citep[p.~12]{simon54orthogonal}
  For any orthogonal polynomial series $\{ p_k (x) \}_{k = 0}^{\infty}$, 
  suppose that the leading coefficients of all polynomial are positive, 
  the series satisfies the recurrence relation: 
\begin{align*}
    & p_{k + 1} (x) = (A_k x + B_k) p_k (x) + C_k p_{k - 1} (x), \notag \\
    & p_{-1}(x) \assign 0,  
    A_k, C_k \in \mathbb{R}^{+}, \ B_k \in \mathbb{R}, \  k \geq 0.    
\end{align*} 
\label{thm:3term}
\end{theorem}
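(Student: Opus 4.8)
The plan is to exploit the fact that, because each $p_k$ has exact degree $k$ with positive leading coefficient, the truncated family $\{p_0, p_1, \ldots, p_n\}$ is a basis of the space of polynomials of degree at most $n$. So I would fix $k \geq 0$ and expand the polynomial $x\, p_k(x)$, which has degree $k+1$, in the orthogonal basis $\{p_0, \ldots, p_{k+1}\}$:
\begin{equation*}
  x\, p_k(x) = \sum_{j=0}^{k+1} c_j\, p_j(x), \qquad c_j = \frac{\langle x p_k, p_j\rangle}{\|p_j\|^2}.
\end{equation*}

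The crux of the argument is to show that only three of the $c_j$ survive. The key observation is that multiplication by $x$ is self-adjoint for the inner product $\langle f, g\rangle \assign \int_a^b f g\, w\, \mathd x$, so that $\langle x p_k, p_j\rangle = \langle p_k, x p_j\rangle$. Since $x p_j$ has degree $j+1$, whenever $j+1 < k$ it is a polynomial of degree strictly below $k$ and is therefore orthogonal to $p_k$, forcing $c_j = 0$. Hence only $c_{k+1}, c_k, c_{k-1}$ can be nonzero, and
\begin{equation*}
  x\, p_k(x) = c_{k+1}\, p_{k+1}(x) + c_k\, p_k(x) + c_{k-1}\, p_{k-1}(x).
\end{equation*}
Solving for $p_{k+1}$ gives exactly the claimed form with $A_k = 1/c_{k+1}$, $B_k = -c_k/c_{k+1}$, and $C_k = -c_{k-1}/c_{k+1}$; the convention $p_{-1} \assign 0$ absorbs the boundary case $k=0$, where the $c_{k-1}$ term is simply absent.

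It remains to check that the constants are real and that $A_k, C_k$ carry the stated sign. Writing $a_k > 0$ for the leading coefficient of $p_k$ and matching the coefficient of $x^{k+1}$ on both sides of the three-term relation yields $c_{k+1} = a_k/a_{k+1} > 0$, so $A_k > 0$. For the last coefficient I would use $c_{k-1} = \langle p_k, x p_{k-1}\rangle/\|p_{k-1}\|^2$ together with the expansion $x\, p_{k-1} = (a_{k-1}/a_k)\, p_k + (\text{lower degree})$, which gives $c_{k-1} = (a_{k-1}/a_k)\,\|p_k\|^2/\|p_{k-1}\|^2 > 0$; combined with $c_{k+1} > 0$ this determines the sign of $C_k$, while $B_k$ is manifestly real.

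The main obstacle—really the only genuinely non-obvious step—is the self-adjointness observation that collapses the full $(k+2)$-term expansion down to three terms; once that is in hand, everything else is bookkeeping with leading coefficients and norms. I would also take care that the degree counting behaves correctly at the boundary (small $k$), which is precisely why the convention $p_{-1}\assign 0$ is stated explicitly.
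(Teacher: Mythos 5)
Your proof is correct and follows essentially the same route as the paper's: expand $x\,p_k(x)$ in the orthogonal basis, use the self-adjointness $\langle x p_k, p_j\rangle = \langle p_k, x p_j\rangle$ of multiplication by $x$ to annihilate every coefficient with $j \le k-2$, and then fix the signs of the surviving coefficients by comparing leading coefficients. The one remark worth making is that your (correct) computation $C_k = -c_{k-1}/c_{k+1}$ with $c_{k-1}, c_{k+1} > 0$ forces $C_k < 0$, which agrees with what the paper's own proof derives (it shows $-C_k/A_k > 0$) but not with the $C_k \in \mathbb{R}^{+}$ written in the theorem statement --- a sign-convention slip in the statement rather than a gap in your argument.
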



\begin{proof}
  The core part of this proof is that $x p_k$ is orthogonal to $p_i$ for $i
  \leq k - 2$, i.e.
  \[ \langle x p_k, p_i \rangle = 0, \quad i \leq k - 2. \]
  Since $x p_k (x)$ is of order $k + 1$, we can rewrite $x p_k (x)$ into the
  combination of first $k + 1$ polynomials of the basis:
  \begin{align}
   \label{eq:proof_of_3term_expansion}
   x p_k (x) = \alpha_{k, k + 1} p_{k + 1} (x) + \alpha_{k, k} p_k (x) +
   a_{k, k - 1} p_{k - 1} (x) + \cdots + \alpha_{k, 0} p_0 (x)
  \end{align}
   
  or in short,
  \[ x p_k (x) = \sum_{j = k + 1}^0 \alpha_{k, j} p_j (x) . \]
  Project each term onto $p_i (x)$,
  \[ \langle x p_k (x), p_i (x) \rangle = \sum_{j = k + 1}^0 \alpha_{k, j}
     \langle p_j (x), p_i (x) \rangle . \]

  Using the orthogonality among $\{ p_k (x) \}_{k = 0}^{\infty}$, 
  we have

  \begin{align}
   \label{eq:proof_of_three_term_coef}
   \langle x p_k (x), p_i (x) \rangle = \langle \alpha_{k, i} p_i (x), p_i
   (x) \rangle \Rightarrow \alpha_{k, i} = \frac{\langle x p_k (x), p_i (x)
   \rangle}{\langle p_i (x), p_i (x) \rangle} . 
  \end{align} 

  Next, we show $\langle x p_k (x), p_i (x) \rangle = 0$ when $i \leq k - 2$.
  Since $\langle x p_k (x), p_i (x) \rangle \equiv \langle p_k (x), x p_i (x)
  \rangle$, it is equivalent to show $\langle p_k (x), x p_i (x) \rangle = 0$.
  
  When $i \leq k - 2$, applying $x p_i (x) = \sum_{j = 0}^{i + 1} \alpha_{i,
  j} p_j (x)$ and the orthogonality between $p_j (x)$ and $p_k (x)$ when $j
  \neq k$, we get
  \[ \langle p_k, x p_i (x) \rangle = \sum_{j = 0}^{i + 1} \alpha_{i, j}
     \langle p_k, p_j (x) \rangle \xequal{j \leq k - 1} 0 \Rightarrow \langle
     x p_k, p_i (x) \rangle = 0. 
   \]
  Therefore, { $x p_k (x) $is only relevant to $p_{k + 1}
  (x)$, $p_k (x)$ and $p_{k - 1} (x)$}. By shifting items, we soonly get that:
  { $p_{ k + 1} (x) $is only relevant to $x p_k (x)$, $p_k (x)$
  and $p_{k - 1} (x)$}.

  At last, we show that, 
  by regularizing the leading coefficients $A_k$ to be positive, ${C_k}> 0$. 
  Firstly, since the leading coefficients are positive, 
  $\{\alpha_k\}_{k=0}^{\infty}$ defined in Equation~\eqref{eq:proof_of_3term_expansion} 
  are positive.
  Then, notice from Equation~\eqref{eq:proof_of_three_term_coef}, 
  we get 
  \[
   - \frac{C_k}{A_k} = \alpha_{k, k-1} = 
   \frac{\langle x p_k (x), p_{k-1} (x)\rangle}{\langle p_{k-1} (x), p_{k-1} (x) \rangle}
  = \frac{\langle  p_k (x), xp_{k-1} (x)\rangle}{\langle p_{k-1} (x), p_{k-1} (x) \rangle}
  =  \frac{\alpha_{k-1,k}}{\langle p_{k-1} (x), p_{k-1} (x) \rangle}.
 \]
  
  We have finished our proof.
\end{proof}

\subsection{Favard's Theorem (Monomial Case)}
\label{sec:favard-monomial}
\begin{theorem}[\textbf{Favard's Theorem}]\cite{favard1935polynomes}
  If a sequence of monic polynomials $\{ P_n \}_{n = 0}^{\infty}$ satisfies a
  three-term recurrence relation
  \[ P_{n + 1} (x) = \left( {x - \gamma_n}  \right) P_n (x) - \beta_n P_{n -
     1} (x), \]
with $\gamma_n, \beta_n \in \mathbb{R}, \beta_n > 0$, 
then $\{ P_n \}_{n = 0}^{\infty}$ is orthogonal with respect to some 
positive weight function.
\label{thm:favard-monic}
\end{theorem}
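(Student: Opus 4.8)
The plan is to prove Favard's theorem by first constructing an abstract moment functional for which the $P_n$ are orthogonal, then showing this functional is positive definite, and finally representing it by a positive measure. Since each $P_n$ is monic of exact degree $n$, the family $\{P_n\}_{n=0}^{\infty}$ is a vector-space basis of $\mathbb{R}[x]$. I would therefore define a linear functional $\mathcal{L}:\mathbb{R}[x]\to\mathbb{R}$ by prescribing its values on this basis: set $\mathcal{L}[P_0]=\beta_0$ (any positive constant works, since the theorem only asks for \emph{some} positive weight) and $\mathcal{L}[P_n]=0$ for $n\ge 1$, extending by linearity. Equivalently, this fixes the moments $\mu_k=\mathcal{L}[x^k]$ recursively. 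The whole content of the theorem is to show that this single normalization already forces orthogonality, and that positivity of the $\beta_n$ upgrades $\mathcal{L}$ to something represented by a genuine weight.

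The key algebraic step is to prove $\mathcal{L}[x^j P_n]=0$ for all $0\le j<n$, which I would establish by induction on the power $j$ (uniformly in $n$). The base case $j=0$ is exactly the defining property $\mathcal{L}[P_n]=0$ for $n\ge 1$. For the inductive step, I would write $x^j P_n = x^{j-1}(xP_n)$ and substitute the recurrence in the form $xP_n = P_{n+1}+\gamma_n P_n+\beta_n P_{n-1}$, giving $\mathcal{L}[x^jP_n]=\mathcal{L}[x^{j-1}P_{n+1}]+\gamma_n\mathcal{L}[x^{j-1}P_n]+\beta_n\mathcal{L}[x^{j-1}P_{n-1}]$; since $j<n$, each of the three terms has its power strictly below its index, so all vanish by the induction hypothesis. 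It is precisely this ``lower the power'' induction that avoids the circularity one hits when expanding $P_{n+1}$ directly: the adjacent off-diagonal term $\mathcal{L}[P_nP_{n+1}]$ cannot be pinned down by the recurrence alone and is only killed through the power induction, whose base case encodes the definition of $\mathcal{L}$. Orthogonality then follows because any $P_m$ with $m<n$ is a combination of $1,x,\dots,x^m$; and the same recurrence substitution applied to $\mathcal{L}[P_n^2]=\mathcal{L}[x^nP_n]$ yields the norm recursion $\mathcal{L}[P_n^2]=\beta_n\,\mathcal{L}[P_{n-1}^2]$, whence $\mathcal{L}[P_n^2]=\beta_0\beta_1\cdots\beta_n$. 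Here the hypothesis $\beta_n>0$ is used essentially: it guarantees $\mathcal{L}[P_n^2]>0$, and then $\mathcal{L}[p^2]=\sum_k a_k^2\,\mathcal{L}[P_k^2]>0$ for every nonzero $p=\sum_k a_kP_k$, i.e.\ $\mathcal{L}$ is positive definite.

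The final and genuinely hard step is to pass from the positive-definite functional $\mathcal{L}$ to an integral representation $\mathcal{L}[p]=\int p(x)\,\mathd\mu(x)$ against a positive measure $\mu$; the $P_n$ are then orthogonal with respect to $\mu$, which is the claimed positive weight. I would invoke the solution of the Hamburger moment problem: the moment sequence $\{\mu_k\}$ is positive definite (equivalently, all its Hankel determinants are positive, which is exactly the positivity just established), and a standard argument---truncating to finite sections, applying a Gauss-type quadrature for the associated Jacobi (tridiagonal) operator, and extracting a weak limit via the Helly selection theorem---produces a representing positive measure. I expect this analytic existence argument to be the main obstacle, both because it is where the real work sits and because one must be careful about the exact sense of ``weight function'': the representing measure need not be absolutely continuous, so in general ``weight function'' should be read as ``positive measure'' (and in the discrete setting relevant to this paper, where $\mathcal{L}[p]=\sum_i (U^\mathrm{T}x)_i^2\,p(\mu_i)$, it is simply a finite sum of point masses at the eigenvalues).
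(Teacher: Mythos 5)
The paper itself offers no proof of this statement: the appendix explicitly excludes ``the original form of Favard's Theorem'' from the results it proves and simply cites \citet{favard1935polynomes}, using the monic case as a black box from which it then derives the general and orthonormal cases. Your argument is therefore not competing with anything in the paper; it is the classical moment-functional proof, and it is correct. The two pivotal ideas are both present and correctly executed: (i) defining $\mathcal{L}$ on the basis $\{P_n\}$ by $\mathcal{L}[P_0]=\beta_0$ and $\mathcal{L}[P_n]=0$ for $n\ge 1$, and running the induction on the \emph{power} $j$ in $\mathcal{L}[x^jP_n]$ rather than on the index $n$ --- this is exactly what breaks the circularity you identify, since rewriting the recurrence as $xP_n=P_{n+1}+\gamma_nP_n+\beta_nP_{n-1}$ drops every exponent by one and places all three resulting terms inside the inductive hypothesis (the borderline term $\beta_n\mathcal{L}[x^{j-1}P_{n-1}]$ still satisfies $n-1\ge j>j-1$ because $j<n$); and (ii) the norm recursion $\mathcal{L}[P_n^2]=\beta_n\mathcal{L}[P_{n-1}^2]$, which is the only place the hypothesis $\beta_n>0$ is used and which yields positive definiteness of $\mathcal{L}$. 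The one step you delegate to standard machinery --- passing from a positive-definite moment functional to a representing positive measure via quadrature on truncations and Helly selection --- is indeed the genuinely analytic part of Favard's theorem, and your sketch of it is the standard one. Your closing caveat is also the right one to flag: the representing measure need not be absolutely continuous, so ``positive weight function'' must be read as ``positive measure''; this is in fact how the paper uses the theorem, since the optimal weight $f$ of Proposition~\ref{prop:exact_weight} is a finite sum of point masses at the eigenvalues.
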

\subsection{Favard's Theorem (General Case) (With Proof)}
\label{sec:proof-of-far}
\begin{corollary}[\textbf{Favard's Theorem; general case}]
   If a sequence of polynomials $\{ P_n \}_n^{\infty}$ statisfies a three-term recurrence
   relation
   \[ P_{n + 1} (x) = \left( \varsigma_n {x - \gamma_n}  \right) P_n (x) -
      \beta_n P_{n - 1} (x), \]
   with $\gamma_n, \beta_n, \varsigma_n \in \mathbb{R}, \varsigma_n \neq 0,
   \beta_n / \varsigma_n > 0$, then there exists a positive weight function 
   $w$ such that $\{ P_n \}_{n = 0}^{\infty}$ is orthogonal with
   respect to the inner product $\langle p, q \rangle = \int_{\mathbb{R}} p(x) q(x)
   w(x) \mathd x$.   
\end{corollary}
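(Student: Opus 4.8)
The plan is to reduce the general case to the monic version of Favard's Theorem (Theorem~\ref{thm:favard-monic}) by rescaling the $P_n$ into monic polynomials. The guiding observation is that orthogonality with respect to a fixed positive weight is unaffected by multiplying each $P_n$ by a nonzero constant, since the inner product is bilinear; hence I am free to renormalize the leading coefficients without changing what must be established.

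First I would track the leading coefficients. Writing $k_n$ for the leading coefficient of $P_n$, the recurrence $P_{n+1}=(\varsigma_n x-\gamma_n)P_n-\beta_n P_{n-1}$ shows that the top-degree term of $P_{n+1}$ comes entirely from $\varsigma_n x\cdot P_n$, so $k_{n+1}=\varsigma_n k_n$. Because every $\varsigma_n\neq 0$, an immediate induction gives $k_n=k_0\prod_{j=0}^{n-1}\varsigma_j\neq 0$; in particular each $P_n$ has exact degree $n$, as required for a genuine basis (cf.\ Remark~\ref{remark:increasing_order}). I then define the monic polynomials $\hat P_n:=P_n/k_n$.

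Next I substitute $P_n=k_n\hat P_n$ into the recurrence and divide through by $k_{n+1}=\varsigma_n k_n$. Using $k_{n-1}/k_n=1/\varsigma_{n-1}$, this produces a monic three-term recurrence
\[
  \hat P_{n+1}(x)=\bigl(x-\hat\gamma_n\bigr)\hat P_n(x)-\hat\beta_n\hat P_{n-1}(x),
  \qquad \hat\gamma_n=\tfrac{\gamma_n}{\varsigma_n},\ \hat\beta_n=\tfrac{\beta_n}{\varsigma_n\varsigma_{n-1}}.
\]
At this point Theorem~\ref{thm:favard-monic} applies \emph{provided} $\hat\beta_n>0$, and verifying this positivity is the crux of the whole argument. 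This is exactly where the sign hypothesis must be used: since $\hat\beta_n=\frac{1}{\varsigma_{n-1}}\cdot\frac{\beta_n}{\varsigma_n}$, the given condition $\beta_n/\varsigma_n>0$ alone controls only the second factor, and one must additionally know the sign of $\varsigma_{n-1}$. Taking $\varsigma_n$ positive, consistent with the leading coefficient $A_k\in\mathbb{R}^{+}$ in the forward direction (Theorem~\ref{thm:3term}), yields $\hat\beta_n>0$; the boundary term $\hat\beta_0$ is harmless since it multiplies $\hat P_{-1}=0$. The main obstacle is precisely this positivity check, which is why the hypotheses on the signs of $\varsigma_n$ and $\beta_n$ are delicate and cannot be dropped.

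Finally, Theorem~\ref{thm:favard-monic} furnishes a positive weight function $w$ for which $\{\hat P_n\}$ is orthogonal. Since $\langle P_m,P_n\rangle=k_m k_n\langle\hat P_m,\hat P_n\rangle$ with all $k_n\neq 0$, the orthogonality of $\{\hat P_n\}$ transfers verbatim to $\{P_n\}$ with respect to the same $w$, completing the proof. Everything apart from the positivity verification of $\hat\beta_n$ is routine bookkeeping of leading coefficients together with bilinearity of the inner product.
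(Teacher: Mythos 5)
Your reduction to the monic case is the same strategy the paper itself uses, but you carry out the leading-coefficient bookkeeping correctly where the paper does not, and in doing so you have put your finger on a genuine defect. The paper's proof sets $\beta_n^{\ast}=\beta_n/\varsigma_n$, defines $P_n^{\ast}$ by the resulting monic recurrence, and asserts that each $P_n$ is a constant multiple of $P_n^{\ast}$; that assertion fails in general, because the true monic recurrence coefficient is $\hat\beta_n=\beta_n/(\varsigma_n\varsigma_{n-1})$, exactly as you compute. Consequently the stated hypothesis $\beta_n/\varsigma_n>0$ does not by itself give $\hat\beta_n>0$, and the corollary as written is actually too weak: taking $P_0=1$, $\varsigma_0=\varsigma_1=-1$, $\gamma_0=\gamma_1=0$, $\beta_1=-1$ (so that $\beta_1/\varsigma_1=1>0$) produces the monic polynomials $1$, $x$, $x^2+1$, and no positive weight function can make $1$ orthogonal to $x^2+1$. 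Your fix --- imposing $\varsigma_n>0$, consistent with the positive-leading-coefficient normalization already assumed in Theorem~\ref{thm:3term} --- is the right one (the minimal alternative would be to assume $\beta_n/(\varsigma_n\varsigma_{n-1})>0$ for $n\ge 1$), and with that strengthened hypothesis your argument is complete and correct. In short: same route as the paper, but your version is the careful one, and the discrepancy you isolate is a real gap in the paper's own proof rather than in yours.
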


\begin{proof}
  Set $\gamma_n^{\ast} = \frac{\alpha_n}{\varsigma_n},  \beta_n^{\ast} =
  \frac{\beta_n}{\varsigma_n}$. Then we can construct a sequence of
  polynomials $\{ P_n^{\ast} \}_n^{\infty}$.
  
  \textbf{Case 1}: For $n = 0$ and $n = 1$, set $P_n^{\ast} \assign P_n (x) /
  \hat{P_n} (x)$ .
  
  \textbf{Case 2}:  For $n \geq 2$, define $P_n^{\ast} (x)$ by the three-term recurrences:
  \[ P_{n + 1}^{\ast} (x) \assign \left( {x - \gamma_n^{\ast}}  \right)
     P_n^{\ast} (x) - \beta_n^{\ast} P_{n - 1}^{\ast} (x) . \]
  According to Theorem \ref{thm:favard-monic}, $\{ P_n^{\ast} \}_n $ is an
  orthogonal basis. Since $P_n^{}$ is scaled $P_n^{\ast}$ by some constant, so
  $\{ P_n^{} \}_n $ \ is also orthogonal.
\end{proof}

\subsection{Proof of Theorem \ref{thm:3term_orthonormal}}
\label{sec:proof-of-3term-orthonormal}
We restate the Theorem of three-term recurrences for orthonormal polynomials
(Theorem~\ref{thm:3term_orthonormal}) as below, and give a proof.

\textbf{(Three Term Recurrences for Orthonormal Polynomials)}
    For orthonormal polynomials $\{ p_k \}_{k=0}^{\infty}$ w.r.t. weight function $w$, 
    suppose that the leading coefficients of all polynomial are positive, 
    there exists the three-term recurrence relation:
    \begin{align*}
      & \sqrt{\beta_{k + 1}} p_{k + 1} (x) 
       = (x - \gamma_k) p_k (x) -
               \sqrt{\beta_k} p_{k - 1} (x), 
      \notag \\
      & p_{-1}(x) \assign 0, \ p_0 (x) = 1 / \sqrt{\beta_0}, 
      \  \gamma_k \in \mathbb{R}, \ \sqrt{\beta_k} \in \mathbb{R}^{+}, \  k \geq 0
   \end{align*}
with $\beta_0 = \int w(x) \mathd x$.
    
\begin{proof}
      \textbf{Case 1}: $k = 0$. $p_k (x)$ is a constant. Suppose it to be $t$, then
      \[ \text{} \langle p_0 (x), p_0 (x) \rangle = t^2 \int_a^b \hspace{0.17em}
         \mathd \alpha \Rightarrow t = 1 / \sqrt{\beta_0} . \]
      \textbf{Case 2}: $k \geq 1$. By Theorem~\ref{thm:3term}, since $\{ p_k \}_{k = 0}^K$ is
      orthogonal, there exist three term recurrences as such:
      \[ p_{k + 1} (x) = (A_k x + B_k) p_k (x) + C_k p_{k - 1} (x), \quad k = 1,
         2, 3, \ldots . \]
      By setting $c_k^{\ast} = \dfrac{1}{A_k}$, $a_k^{\ast} = - \dfrac{B_k}{A_k}$,
      $b_k^{\ast} = - \dfrac{{C_k} }{A_k}$, it can be rewritten into
      \begin{align}
         \label{eq:proof_for_3term_ortho_rewrite}
         c_k^{\ast} p_{k + 1} (x) = (x - a_k^{\ast}) p_k (x) - b_k^{\ast} p_{k -
         1} (x), \quad k = 1, 2, 3, \ldots .
      \end{align}
      
      Apply dot products with $p_{k - 1} (x)$ to Equation~\eqref{eq:proof_for_3term_ortho_rewrite}, 
      we get
      \begin{eqnarray}
         \label{eq:proof_for_3term_ortho_medium_1}
        \langle x p_k (x), p_{k - 1} (x) \rangle & = & \langle b^{\ast}_k p_{k -
        1} (x), p_{k - 1} (x) \rangle \notag \\
        & \Rightarrow & b^{\ast}_k = \langle x p_k (x), p_{k - 1} (x) \rangle \\
        &  & (k = 1, 2, 3, \ldots) \notag .
      \end{eqnarray}
      Similarly, apply dot products with $p_{k + 1} (x)$, we get:
      \begin{eqnarray}
         \label{eq:proof_for_3term_ortho_medium_2}
        \langle c_k^{\ast} p_{k + 1} (x), p_{k + 1} (x) \rangle & = & \langle x
        p_k (x), p_{k + 1} (x) \rangle  \notag  \\
        & \Rightarrow & c_k^{\ast} = \langle x p_k (x), p_{k + 1} (x) \rangle \notag \\
        & \Rightarrow & c_k^{\ast} = \langle x p_{k + 1} (x), p_k (x) \rangle \\
        &  & (k = 1, 2, 3, \ldots) \notag.
      \end{eqnarray}
      Notice that in Equation~\eqref{eq:proof_for_3term_ortho_medium_2}
      \[ \langle x p_k (x), p_{k + 1} (x) \rangle = \langle p_k (x), x p_{k + 1}
         (x) \rangle \overset{\eqref{eq:proof_for_3term_ortho_medium_1}}{=} b_{k + 1}^{\ast} . \]
      We get:
      \[ c_k^{\ast} = b_{k + 1}^{\ast} . \]
      
      So we can write Equation~\eqref{eq:proof_for_3term_ortho_rewrite} into the form below: \
      \[ b_{k + 1}^{\ast} p_{k + 1} (x) = (x - a_k^{\ast}) p_k (x) - b_k^{\ast}
         p_{k - 1} (x), \quad k = 1, 2, 3, \ldots . \]
      At last, we show $b_k^{\ast} > 0$.
      
      Firstly, recall that $b^{\ast}_k = \langle x p_k (x), p_{k - 1} (x) \rangle
      = \langle p_k (x), x p_{k - 1} (x) \rangle$. Since $x p_{k - 1} (x)$, which
      is of order $k$, can be written into the combination of $\{ p_j \}_{j =
      0}^k$ which the leading coefficients to be non-zero, i.e.
      \[ x p_{k - 1} (x) = a_{k, k} p_k (x) + a_{k, k - 1} p_{k - 1} (x) + \cdots
         + a_{k, 0} p_0 (x) \quad (a_{k, k} \neq 0) \]
      Secondly, since $\langle g (x), g (x) \rangle \equiv \langle - g (x), - g
      (x) \rangle$, we can restrict all the leading coefficients to be positive.
      \[ b_n^{\ast} = \langle p_k (x), x p_{k - 1} (x) \rangle = a_{k, k} > 0. \]
      Thus we have proved $b^{\ast}_k > 0$ holds.
      
      Furthermore, we can rewrite $b_k^{\ast}$ into $\sqrt{\beta_k}$. 
      The proof is finished.
    \end{proof}

\subsection{Proof of Theorem \ref{thm:far-orthonormal}}
\label{sec:proof-of-favard-orthonormal}

We restate Favard's Theorem for orthonormal polynomials
(Theorem~\ref{thm:far-orthonormal}) as below, and give a proof
based on the general case~\ref{sec:proof-of-far}. 


\textbf{(Favard Theorem; Orthonormal case)}
  A polynomial series $\{ p_k \}_{k = 0}^{\infty}$ who satisfies the recurrence relation
  \begin{align*}
    \label{eq:formula_orthonormal}
    & \sqrt{\beta_{k + 1}} p_{k + 1} (x) 
     = (x - \gamma_k) p_k (x) -
            \sqrt{\beta_k} p_{k - 1} (x), 
    \notag \\
    & p_{-1}(x) \assign 0, \ p_0 (x) = 1 / \sqrt{\beta_0}, 
     \gamma_k \in \mathbb{R}, \ \sqrt{\beta_k} \in \mathbb{R}^{+}, \  k \geq 0
\end{align*}
  is orthonormal w.r.t. a weight function $w$ that  $\beta_0 = \int w(x) \mathd x$.

\begin{proof}
  First of all, according to Theorem \ref{thm:far-orthonormal}, 
  the series $\{ p_k \}_{k = 0}^{\infty}$ is orthogonal.
  
  Apply dot products with $p_{k - 1} (x)$, we get
  \begin{eqnarray*}
    \langle x p_k (x), p_{k - 1} (x) \rangle & = & \left\langle \sqrt{\beta_k}
    p_{k - 1} (x), p_{k - 1} (x) \right\rangle\\
    \Rightarrow \langle x p_k (x), p_{k - 1} (x) \rangle & = & \sqrt{\beta_k}
    \langle p_{k - 1} (x), p_{k - 1} (x) \rangle\\
    &  & (k = 0, 1, \ldots) .
  \end{eqnarray*}
  
  Similarily, apply dot products with $p_{k + 1} (x)$, we get:
  \begin{eqnarray*}
    \left\langle \sqrt{\beta_{k + 1}} p_{k + 1} (x), p_{k + 1} (x)
    \right\rangle & = & \langle x p_k (x), p_{k + 1} (x) \rangle\\
    \Rightarrow \sqrt{\beta_{k + 1}} \langle p_{k + 1} (x), p_{k + 1} (x)
    \rangle & = & \langle x p_k (x), p_{k + 1} (x) \rangle\\
    &  & (k = 0, 1, \ldots),
  \end{eqnarray*}
  which can be rewritten as:
  \[ \begin{array}{lll}
       \sqrt{\beta_k} \langle p_k (x), p_k (x) \rangle & = & \langle x p_{k -
       1} (x), p_k (x) \rangle\\
       &  & (k = 1, 2, \ldots),
     \end{array} \]
  Notice that
  \[ \langle x p_{k - 1} (x), p_k (x) \rangle = \langle x p_k (x), p_{k - 1}
     (x) \rangle . \]
  We get:
  \begin{eqnarray*}
    \sqrt{\beta_k} \langle p_k (x), p_k (x) \rangle & = & \langle x p_{k - 1}
    (x), p_k (x) \rangle\\
    & = & \sqrt{\beta_k} \langle p_{k - 1} (x), p_{k - 1} (x) \rangle\\
    \Rightarrow \langle p_k (x), p_k (x) \rangle & = & \langle p_{k - 1} (x),
    p_{k - 1} (x) \rangle\\
    &  & (k = 1, 2, \ldots),
  \end{eqnarray*}
  which indicates that the polynomials $\{ p_k \}_{k = 0}^K$ are same in their
  norm.
  
  Since $p_0 (x) \equiv 1 / \sqrt{\beta_0}$ and $\beta_0 = \int w(x) \mathd x$,
  $\left\langle {p_0}  (x), p_0 (x) \right\rangle 
  =
  \dfrac{1}{{\beta_0}}
  \int w(x) \mathd x$
  =1.
Thus the norm of every polynomial in $\{ p_k\}_{k = 0}^{\infty}$  equals $1$. 
  
  Combining that $\{ p_k \}_{k = 0}^{\infty}$ is orthogonal and $\left\langle
  {p_k}  (x), p_k (x) \right\rangle = 1$ for all $k$, we arrive that $\{ p_k
  \}_{k = 0}^{\infty}$ is an orthonormal basis.
\end{proof}

\newpage
\subsection{Proof of Proposition~\ref{prop:onlytwo}}
\label{sec:proof-of-vec3term}
\begin{proof}
  First, from the construction of each $v_{i + 1}$ (Algorithm~\ref{alg:OptBasisFilteringRaw}, $k = i$), 
  we know that $v_{i + 1}$ is composed of $\{ v_j \}_{j}^{j = i}$ and
  $\hat{P} v_i$. Therefore, $\hat{P} v_i$ can be expressed as a weighted sum of
  $\{ v_j \}_{j = 0}^{i + 1}$, denoted as 
  $
  \hat{P} v_i = t_{i + 1} v_{i + 1} + t_i v_i + \cdots + t_0 v_0
  ~\refstepcounter{equation}(\theequation)
  \label{eq:weightsum}
  $.
  Second, notice that 
  $
  \langle \hat{P} v_k, v_i \rangle = v_k^T  \hat{P} v_i = \langle v_k, \hat{P} v_i \rangle
  ~\refstepcounter{equation}(\theequation)
  \label{eq:sym}
  $.
  Thus, for Step 2 in Algorithm~\ref{alg:OptBasisFilteringRaw}, 
  for each $i \in \left[0,1,\cdots, k \right]$ 
  we can rephrase $\langle v_{k+1}^{*}, v_i \rangle$ by: 
  \begin{align*}
    \langle v_{k+1}^{*}, v_i \rangle & 
    \overset{\text{def}}{=} \langle \hat{P} v_k, v_i \rangle 
    \overset{\eqref{eq:sym}}{=} \langle v_k, \hat{P} v_i \rangle
    \\
   &  \overset{\eqref{eq:weightsum}}{=}  \left\langle v_k, \sum_{j = 0}^{i
     + 1} t_j v_j \right\rangle 
    \\
  & = \sum_{j = 0}^{i + 1} t_j \langle v_k,
     v_j \rangle, 
  \end{align*}
  which equals $0$ when $i < k - 1$.
\end{proof}
\subsection{Proof of Lemma~\ref{lemma:consistent}}
\label{sec:proof-of-consistent-equation}

\begin{proof}
    First, notice that
    \[
        \langle v_{k + 1}^{\ast}, v_{k-1} \rangle
        = \langle \hat{P} v_{k}, v_{k-1} \rangle
        = \langle  v_{k}, \hat{P} v_{k-1} \rangle .
    \]
    On the other hand, 
    \[\|v_{k+1}^{\bot}\| 
    = \langle v_{k + 1}, v_{k+1}^{\bot} \rangle 
    = \langle v_{k + 1}, v_{k+1}^{\ast} \rangle 
    = \langle v_{k + 1}, \hat{P} v_{k} \rangle  .
    \]
    So, we get 
    \[
        \|v^{\bot}_{k}\|  
        = \langle v_{k}, \hat{P} v_{k-1} \rangle
        = \langle \hat{P}  v_{k}, v_{k-1} \rangle
        = \langle v_{k + 1}^{\ast}, v_{k-1} \rangle    .
    \]
\end{proof}

Thus, we have finished our proof.
\vspace{100mm}

\section{Pseudo-codes}

\subsection{Pseudo-code for FavardGNN.}
\label{sec:pseudo_torch_Favard}
\begin{algorithm}[H]
\caption{FavardGNN.\textit{Pytorch style}.}
\label{alg:cotextde}
\definecolor{codeblue}{rgb}{0.25,0.5,0.5}
\lstset{
  backgroundcolor=\color{white},
  basicstyle=\fontsize{7.2pt}{7.2pt}\ttfamily\selectfont,
  columns=fullflexible,
  breaklines=true,
  captionpos=b,
  commentstyle=\fontsize{7.2pt}{7.2pt}\color{codeblue},
  keywordstyle=\fontsize{7.2pt}{7.2pt},
}
\begin{lstlisting}[language=python]
# f: raw feature dimension
# d: hidden dimension, or number of channels
# N: number of nodes
# K: order of polynomial basis
# X(Nxd): Input features 
# P(NxN): Sym-normalized adjacency matrix 
# Coef(dx(K+1)): coefficient matrix
# SqrtBeta(dx(K+1)): Coefficients for three-term recurrences 
# Gamma(dx(K+1)): Coefficients for three-term recurrences 


# Transfer raw input in signals 
X = ReLU(MLP(X.dropout())).dropout()  # (Nxd)

SqrtBeta = torch.clamp(norm, 1e-2)

# Process H_0
H_0 = X / SqrtBeta[:,0]    # (Nxd)

Z = torch.zeros_like(X)
# Add to the final representation
Z = Z + torch.einsum('Nd,d->Nd', H_0, Coef[:,0])  

last_H = H_0
second_last_H = torch.zeros_like(H_0)

for k in range(1, K):
    # Three-term Recurrence Formula for Orthonormal Polynomials
    H_k = P @ last_H   # (Nxd)
    H_k = H_k - Gamma[k,:].unsqueeze(0)*last_H - SqrtBeta[k,:].unsqueeze(0)*second_last_H
    H_k = H_k / SqrtBeta[k+1,:].unsqueeze(0)

    # Add to the final representation
    Z = Z + torch.einsum('Nd,d->Nd', H_k, Coef[:,k])

    # Update variables
    second_last_H = last_H
    last_H = H_k

# Transform the final representation into predictions
Y = MLP(ReLU(Z).dropout())
Pred = Softmax(Y)
return Pred
\end{lstlisting}
\end{algorithm}

\subsection{Pseudo-code for OptBasisGNN.}
\label{sec:pseudo_torch_OptBasis}

\begin{algorithm}[H]
\caption{OptBasisGNN.\textit{Pytorch style}.}
\label{alg:cotextde}

\definecolor{codeblue}{rgb}{0.25,0.5,0.5}
\lstset{
  backgroundcolor=\color{white},
  basicstyle=\fontsize{7.2pt}{7.2pt}\ttfamily\selectfont,
  columns=fullflexible,
  breaklines=true,
  captionpos=b,
  commentstyle=\fontsize{7.2pt}{7.2pt}\color{codeblue},
  keywordstyle=\fontsize{7.2pt}{7.2pt},
}
\begin{lstlisting}[language=python]
# f: raw feature dimension
# d: hidden dimension, or number of channels
# N: number of nodes
# K: order of polynomial basis
# X(Nxd): Input features 
# P(NxN): Sym-normalized adjacency matrix 
# Coef(dxK): coefficient matrix


# Transfer raw input in signals 
X = ReLU(MLP(X.dropout())).dropout()  # (Nxd)

# Normalize H_0
norm = torch.norm(X, dim=0).view(1, d)
norm = torch.clamp(norm, 1e-8)
H_0 = X / norm    # (Nxd)

Z = torch.zeros_like(X)
# Add to the final representation
Z = Z + torch.einsum('Nd,d->Nd', H_0, Coef[:,0])  

last_H = H_0
second_last_H = torch.zeros_like(H_0)

for k in range(1, K):
    H_k = P @ last_H   # (Nxd)

    # Orthogonalize H_k to all the former vectors
    # To achieve this, only 2 substractions are required
    project_1 = torch.einsum('Nd,Nd->1d', H_k, last_H)              # (1xd)
    project_2 = torch.einsum('Nd,Nd->1d', H_k, second_last_H)       # (1xd)
    H_k = H_k - project_1 *  last_H - project_2 * second_last_H     # (Nxd)

    # Normalize H_k
    norm = torch.norm(H_k, dim=0).view(1, d)
    norm = torch.clamp(norm, 1e-8)
    H_k = H_k / norm   # (Nxd)

    # Add to the final representation
    Z = Z + torch.einsum('Nd,d->Nd', H_k, Coef[:,k])

    # Update variables
    second_last_H = last_H
    last_H = H_k

# Transform the final representation to predictions
Y = MLP(ReLU(Z).dropout())
Pred = Softmax(Y)
return Pred
\end{lstlisting}
\end{algorithm}

\section{Experimental Settings.}

\subsection{Node Classification Tasks on Large and Small Datasets.}
\label{expappendix:nodecls}

\paragraph*{Model setup.} The structure of FavardGNN and OptBasisGNN follow Algorithm~\ref{alg:favardgnn_cls}. 
The hidden size of the first MLP layers $h$ is set to be $64$, 
which is also the number of filter channels. 
For the scaled-up OptBasisGNN, 
we drop the first MLP layer to fix the basis vectors needed for precomputing, and following the scaled-up version of ChebNetII~\cite{he2022chebii}, 
we add a three-layer  MLP with weight matrices of shape 
$F \times h$, $h \times h$ and $h \times c$ 
after the filtering process.

For both models, the initialization of $\alpha$ is set as follows: for each channel $l$, 
the coefficients of the $g_{0,l}$ are set to be $1$, 
while the other coefficients are set as zeros, 
which corresponds to initializing the polynomial filter 
on each channel to be $h(\lambda)=1-\lambda$. 
For the initialization of three-term parameters that determine the initial polynomial bases on each channel, 
we simply set $\{\sqrt{\beta}\}$ to be ones, 
and $\{\gamma\}$ to be zeros. 

\paragraph*{Hyperparameter tunning.} 
For the optimization process on the training sets, we tune all the parameters with Adam ~\cite{kingma2014adam} optimizer. 
We use early stopping with a patience of 300 epochs.

We choose hyperparameters on the validation sets.
To accelerate hyperparameter choosing, 
we use Optuna\cite{akiba2019optuna} to select hyperparameters from the range below with a maximum of 100 complete trials\footnote{We use Optuna's Pruner to drop some hyperparameter choice in an early stay of training. This is called an incomplete/pruned trial.}:
\begin{enumerate}[topsep=0pt,itemsep=-1ex,partopsep=1ex,parsep=1ex]
    \item Truncated Order polynomial series: $K \in \{2, 4, 8, 12, 16, 20\}$; 
    \item Learning rates: $\{0.0005, 0.001, 0.005, 0.1, 0.2, 0.3, 0.4, 0.5\}$;
    \item Weight decays: $ \{1 \mathrm{e}{-8}, \cdots, 1\mathrm{e}{-3} \} $;
    \item Dropout rates: $ \{0., 0.1, \cdots, 0.9 \} $; 
\end{enumerate}

There are two extra hyperparameters for scaled-up OptBasisGNN:
\begin{enumerate}[topsep=0pt,itemsep=-1ex,partopsep=1ex,parsep=1ex]
\item Batch size: $\{ 10,000, 50,000 \}$; 
\item Hidden size (for the post-filtering MLP): $\{ 512, 1024, 2048 \}$.  
\end{enumerate}

\subsection{Multi-Channel Filter Learning Task.}
\label{expappendix:regression}

\paragraph*{YCbCr Channels.}
We put the practical background of our multichannel experiment in 
the YCbCr color space, a useful color space in computer vision and multi-media
~\cite{shaik2015YCbCr}. 

\paragraph*{Our Synthetic Dataset.}
When creating our datasets with 60 samples, 
we use 4 filter combinations on 15 images in \citet{He2021bern}'s single filter learning datasets. The 4 combinations on the three channels are: 
\begin{enumerate}[topsep=0pt,itemsep=-1ex,partopsep=1ex,parsep=1ex]
    \item Band-reject(Y) / low-pass(Cb) / high-pass(Cr); 
    \item High-pass(Y) / High-pass(Cb) / low-pass(Cr);
    \item High-pass(Y) / low-pass(Cb) / High-pass(Cr);
    \item Low-pass(Y) / band-reject(Cb) / band-reject(Cr).
\end{enumerate}

The concrete definitions of the signals, i.e. band-reject are aligned with those given in~\citet{he2022chebii}.

\paragraph*{Visualization on more samples.} We visualize more samples as Figure~\ref{fig:regression} in Figure \ref{fig:more-samples}. In all the samples, the tendencies of different curves are alike. 




 \begin{figure}[htp]
  \begin{subfigure}{0.5\textwidth}
    \centering
    \includegraphics[width=1.\textwidth]{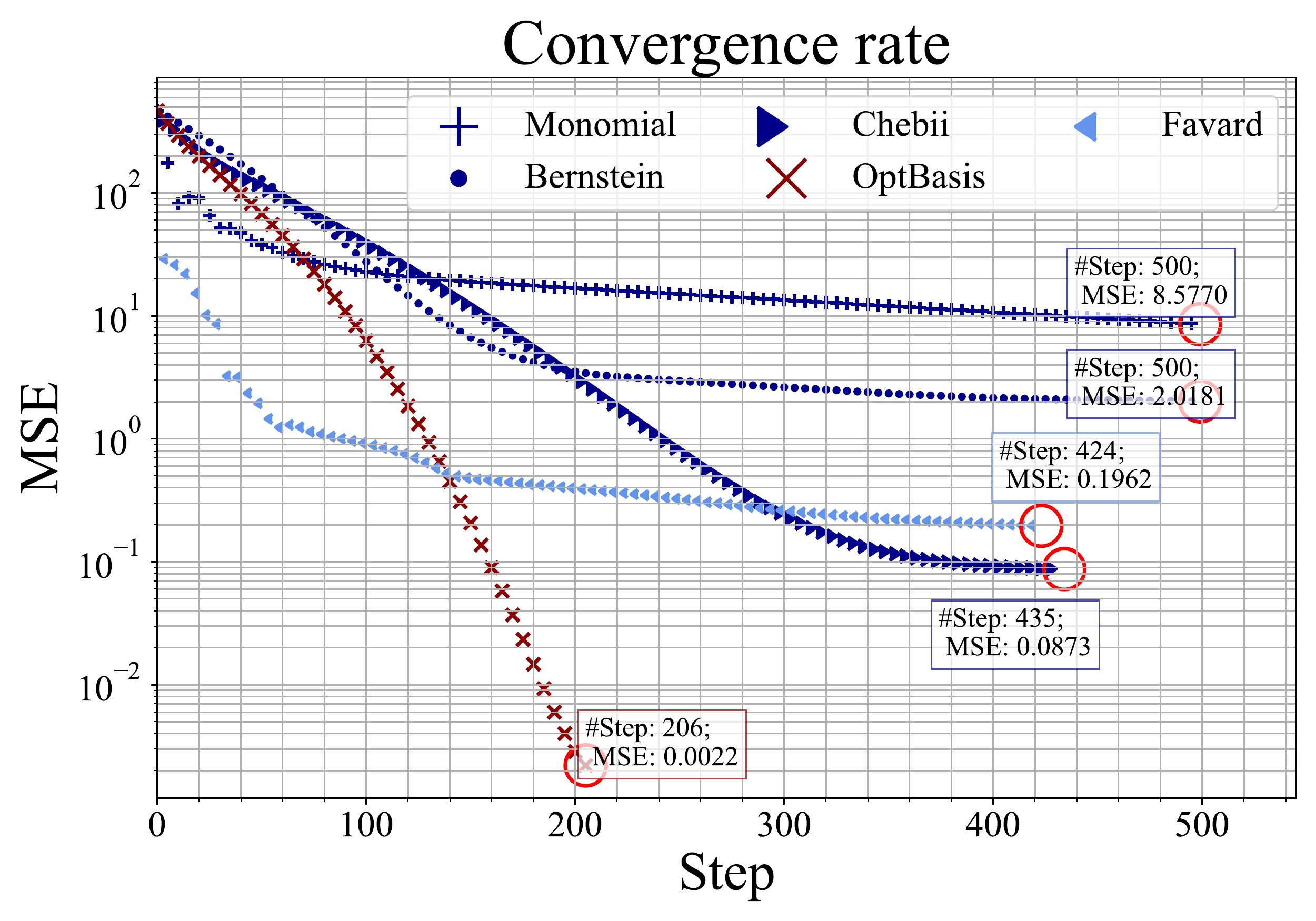}  
  \end{subfigure}
  \begin{subfigure}{0.5\textwidth}
    \centering
    \includegraphics[width=1\textwidth]{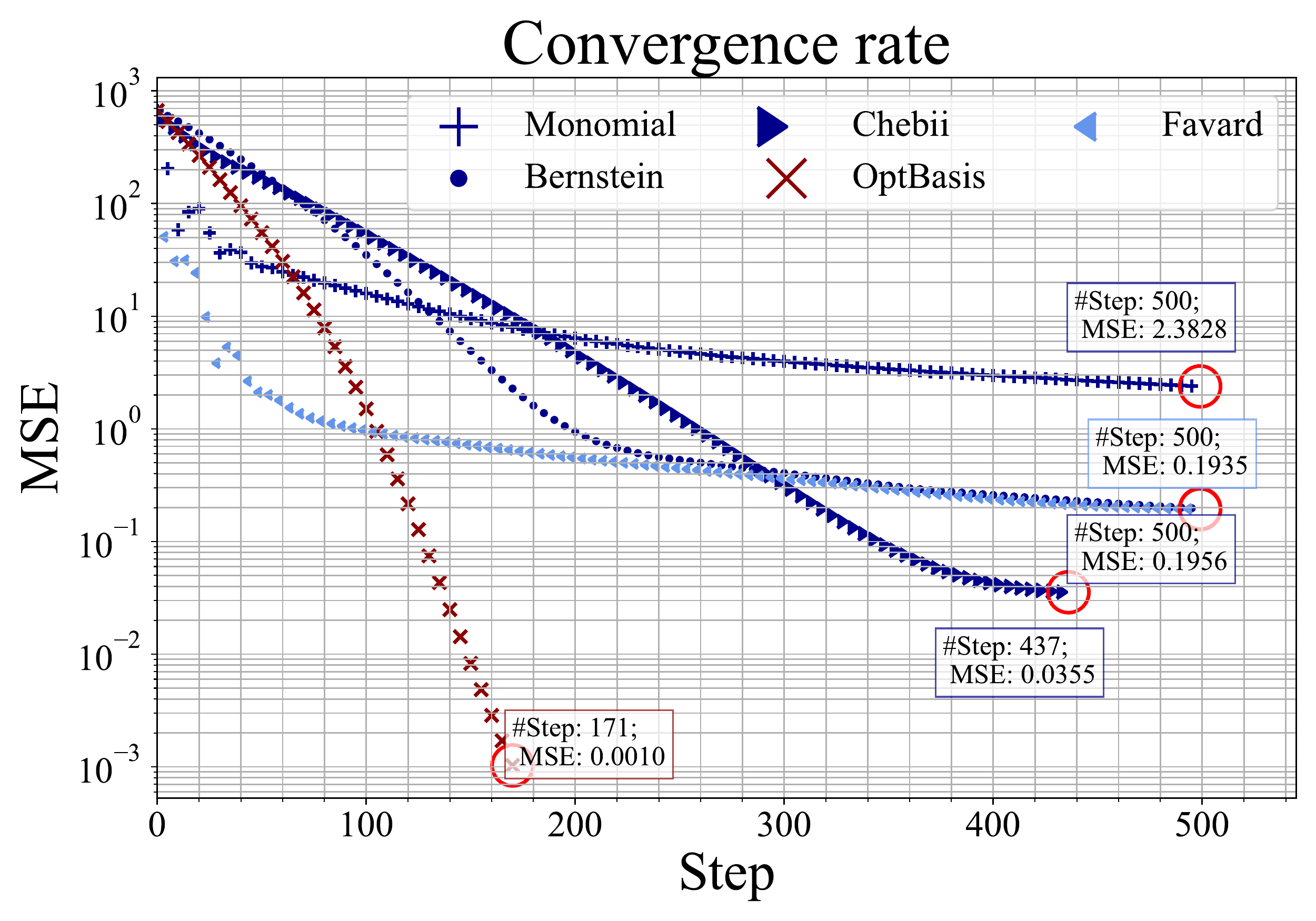}  
  \end{subfigure}
  \begin{subfigure}{0.5\textwidth}
    \centering
    \includegraphics[width=1\textwidth]{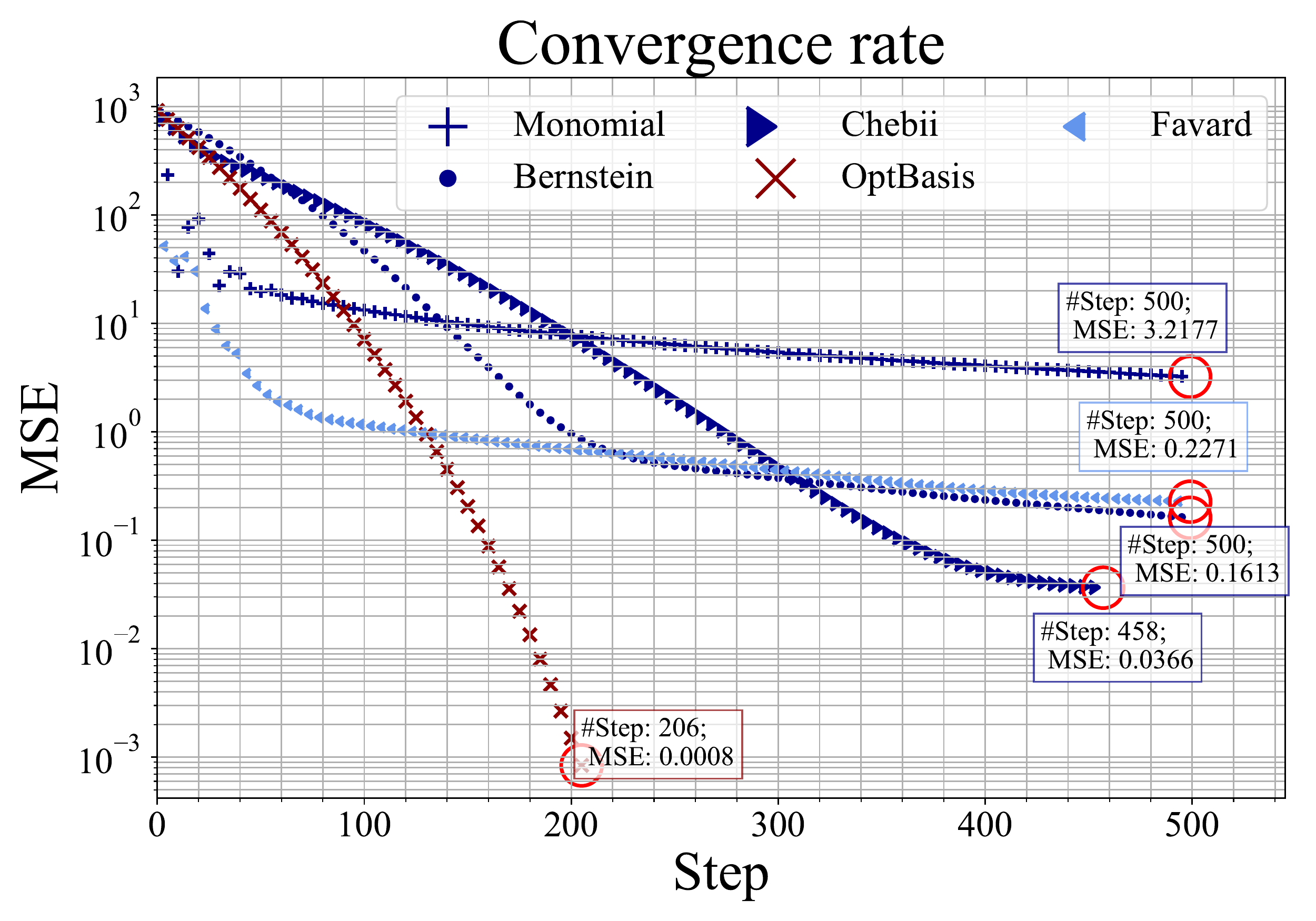}  
  \end{subfigure}
    \begin{subfigure}{0.5\textwidth}
    \centering
    \includegraphics[width=1\textwidth]{figures/convergence_grid_sample_3_withF.pdf}  
  \end{subfigure}
  \caption{Visualization with more samples in the multi-channel filter learning task.
  }
  \label{fig:more-samples}
  \end{figure}

\subsection{Examining of FavardGNN's bump.}
\label{expappendix:bump}
Figure \ref{fig:regression_10000} (right), 
we observe bump with a node classification setup. 
To show this clearer, we let FavardGNN and GPR-GNN (which uses the Monomial basis for classification) to fit \textit{the whole set} of nodes, and move \textrm{dropout} and \textrm{Relu} layers. 
As in the regression re-examine task, 
we cancel the earlystop mechanism,
stretch the epoch number to 10,000, 
and record cross entropy loss on each epoch.

\section{Summary of Wang's work}
\label{sec:SumWang}

This section is a restate for a part of \citet{Wang2022jacobi}. 
For the convenience of the reader's reference, 
we write this section here. 
More interested readers are encouraged to refer to the original paper.

\citet{Wang2022jacobi} raise a criterion for best basis, but states that it
{\textbf{cannot be reached}}.


\subsection{The Criterion for Optimal Basis}
\label{sec-Wang}

Following \citet{keyulu2021Optm}, 
\citet{Wang2022jacobi} considers the squared loss 
$R = \frac{1}{2} \| Z - Y \|_\textrm{F}^2$, 
where $Y$ is the target
, and $Z = \underset{l \in [1, h]}{\|}  \sum_{k = 0}^K \alpha_{k, l}
g_{k, l} (\hat{P}) X_{:, l}$ . \footnote{Here, $X$ is not necessarily the raw feature ($X_\textrm{raw}$) but often some thing like $X_\textrm{raw}W$. $W$ is irrelevant to the choice of polynomial basis, 
and merges $W$ into $X$.}

Since each signal channel is independent and contributes independently to the
loss, i.e. $R = \sum_l \frac{1}{2} \| Z_{:, l} - Y_{:, l} \|_\mathrm{F}^2$, we can
then consider the loss function channelwisely and ignore $l$. Loss on one
signal channel $x$ is:
\[ r = \frac{1}{2} \| z - y \|^2_\textrm{F}, \]
where $z = \sum_{k = 0}^K \alpha_k g_k (\hat{P}) x$.

This loss is a convex function w.r.t. $\alpha$. Therefore, the gradient
descents's convergence rate depends on the {\textbf{Hessian matrix}}'s
condition number, denoted as $\kappa (H)$. When $H$ is an identity matrix,
$\kappa (H)$ reaches a minimum and leads to the best convergence rate (Boyd \&
Vandenberghe, 2009).

The Hessian matrix $H$ looks like\footnote{Note that, \citet{Wang2022jacobi}
{\small{}}define $g_{k_2}$ on $\hat{L}$ (or $\{ \lambda_i \}$) while we define
it on $\hat{P}$ (or $\{ \mu_i \}$). They are equivalent.}:
\begin{equation*}
  H_{k_1 k_2} = \frac{\partial^2 r}{\partial \alpha_{k_1} \partial \alpha_{k_2}} = x^\mathrm{T}
  g_{k_2} (\hat{P}) g_{k_1} (\hat{P}) x. 
\end{equation*}


\citet{Wang2022jacobi} further write $H_{k_1 k_2}$ in the following form:
\[ H_{k_1 k_2} = x^T g_{k_2} (\hat{P}) g_{k_1} (\hat{P}) x = \sum_{i = 1}^n
   g_{k_1} (\mu_i) g_{k_2} (\mu_i) (U^\mathrm{T} x)_i^2 , 
\]
which can be equivalently expressed as a Riemann sum:
\[ 
\sum^N_{i = 1} g_{k_1} (\mu_i) g_{k_2} (\mu_i) \frac{F (\mu_i) - F (\mu_{i
   - 1})}{\mu_i - \mu_{i - 1}} (\mu_i - \mu_{i - 1}), 
\]
where 
$F (\mu) \assign \sum_{\mu_i \leq \mu} (U^\mathrm{T} x)_i^2$
.
Define 
$f (\mu) = \frac{^{} \vartriangle F (\mu)}{\vartriangle
\mu}$, 
$H_{k_1 k_2}$ comes to
\[ H_{k_1 k_2} = \text{} \int_{\mu = - 1}^1 g_{k_1} (\mu) g_{k_2} (\mu) f
   (\mu) \mathd \mu . 
\]
This suggests that, 
$\{ g_k \}_{k = 0}^{K}$ is an optimal basis when it is
{\textbf{orthonormal}} w.r.t. {\textbf{weight function}} $f (\cdot)$. (For
more about orthonormal basis, see Section~\ref{para:inner_product}.)

\subsection{Wang's Method}

Having write out the weight function $f (\mu)$, 
the optimal basis is determined. 
\citet{Wang2022jacobi} think of a regular process for getting this optimal
basis, which is unreachable since eigendecomposition is unaffordable for
large graphs. 
We summarize this process in Algorithm \ref{alg:unreacheable}.

According to \citet{Wang2022jacobi}, 
the optimal basis would be an orthonormal basis,
but unfortunately, this basis and the exact form of its weight function is
unattainable. 
As a result, 
they come up with a compromise by allowing the
model to choose from the orthogonal Jacobi bases, which have ``{\tmem{flexible
enough weight functions}}'', i.e. $(1 - \mu)^{a} (1 + \mu)^{b}$. The
Jacobi bases are a family of polynomial bases. A specific form Jacobi basis is
determined by two parameters $\left(a,  b\right)$. Similar to the well-known
Chebyshev basis, the Jacobi bases have a recursive formulation, making them
efficient for calculation.

\end{document}